\documentclass{article}

\usepackage{microtype}
\usepackage{graphicx}
\usepackage{subcaption}
\usepackage{booktabs} 
\usepackage{hyperref}

\usepackage[preprint]{icml2026}

\usepackage{amsmath}
\usepackage{amssymb}
\usepackage{mathtools}
\usepackage{amsthm}
\usepackage{soul}
\usepackage[dvipsnames]{xcolor}

\definecolor{mygreen}{rgb}{0.21, 0.60, 0.21}

\usepackage[capitalize,noabbrev]{cleveref}

\usepackage[toc,page,header]{appendix}
\usepackage{minitoc}  

\newcommand{\loss}{\ell}

\newcommand{\predictionset}{\mathcal{C}}
\newcommand{\calibration}[1][]{\mathcal{D}_{#1}}
\newcommand{\threshold}{\lambda}
\newcommand{\thresholdmax}{\lambda_{\max}}

\newcommand{\Threshold}{\Lambda}
\newcommand{\Risk}{R}

\newcommand{\correction}{\gamma}
\newcommand{\R}{\mathbb{R}}
\newcommand{\E}{\mathbb{E}}
\newcommand{\sample}[1][]{(X_{#1},Y_{#1})}
\newcommand{\samples}[1][]{(X_{i},Y_i)_{i=1}^{#1}}

\theoremstyle{plain}
\newtheorem{theorem}{Theorem}[section]
\newtheorem{proposition}[theorem]{Proposition}
\newtheorem{lemma}[theorem]{Lemma}
\newtheorem{corollary}[theorem]{Corollary}
\theoremstyle{definition}
\newtheorem{definition}[theorem]{Definition}

\theoremstyle{remark}
\newtheorem{remark}[theorem]{Remark}

\theoremstyle{remark}
\newtheorem{example}[theorem]{Example}

\usepackage[textsize=tiny]{todonotes}

\icmltitlerunning{Anytime-Valid Conformal Risk Control}
\begin{document}

\doparttoc 
\faketableofcontents 

\twocolumn[
  \icmltitle{Anytime-Valid Conformal Risk Control}



  \icmlsetsymbol{equal}{*}

  \begin{icmlauthorlist}
    \icmlauthor{Bror Hultberg}{yyy}
    \icmlauthor{Dave Zachariah}{yyy}
    \icmlauthor{Ant\^onio H. Ribeiro}{yyy,scilifelab}
  \end{icmlauthorlist}

  \icmlaffiliation{yyy}{Division of Systems and Control, Department of Information Technology, Uppsala University, Sweden}
  \icmlaffiliation{scilifelab}{Science for Life Laboratory, Uppsala, Sweden}

  \icmlcorrespondingauthor{Bror Hultberg}{bror.hultberg@it.uu.se}

  \icmlkeywords{Machine Learning, ICML}

  \vskip 0.3in
]



\printAffiliationsAndNotice{}  

\begin{abstract}
Prediction sets provide a means of quantifying the uncertainty in predictive tasks. Using held out calibration data, conformal prediction and risk control can produce prediction sets that exhibit statistically valid error control in a computationally efficient manner. However, in the standard formulations, the error is only controlled on average over many possible calibration datasets of fixed size. In this paper, we extend the control to remain valid with high probability over a cumulatively growing calibration dataset at any time point. We derive such guarantees using quantile-based arguments and illustrate the applicability of the proposed framework to settings involving distribution shift. We further establish a matching lower bound and show that our guarantees are asymptotically tight. Finally, we demonstrate the practical performance of our methods through both simulations and real-world numerical examples.

\end{abstract}

\section{Introduction}

Standard prediction problems focus on a point prediction $f(X)$ of an outcome variable $Y$. A prediction \emph{set} $\predictionset(X)$, which aims to cover $Y$, is often more informative as the set size reflects uncertainty of the outcome. Thus $\predictionset(X)$ could be a variable interval or a collection of labels depending on the domain of the outcome. Split conformal prediction is a widely used framework for constructing prediction sets. Using a held out calibration set  $\calibration[n]=\samples[n]$~\cite{vovk2005algorithmic,angelopoulos2023conformal}, the framework generates a prediction set $\predictionset_n(X)$ that ensures a miscoverage rate no greater than $\alpha$ for the next observation $(X_{n+1}, Y_{n+1})$, i.e.,
\begin{equation}
    \mathbb{P}\big(Y_{n+1} \not \in \predictionset_n(X_{n+1}) \big) \leq \alpha.
\label{eq:miscoveragecontrol_standard}
\end{equation}
While elegant and computationally simple, the achievable miscoverage control \eqref{eq:miscoveragecontrol_standard} has certain limitations. 

One main limitation about the procedure is that it is a data-marginalized control, meaning that the probability is evaluated \emph{jointly} over the calibration set $\calibration[n]$ and the out-of-sample test point $\sample[n+1]$. Consequently, for any realized  dataset $\calibration[n]$, the miscoverage rate can vary widely from $\alpha$ \cite{pmlr-v25-vovk12,MARQUESF2025110350,zwart2025probabilisticconformalcoverageguarantees}. For instance, when $n=500$ there is a 48\% chance of drawing $\calibration[n]$ such that the rate exceeds the specified $\alpha$, see Figure \ref{fig:1a}. We want to ensure that the miscoverage rate is controlled with a given probability, say, 90\%.

In many circumstances data collection is a cumulative process such that $n$ in $\calibration[n]$ increases and yields a sequence of prediction sets $\{ \predictionset_n(X) \}$. However, \eqref{eq:miscoveragecontrol_standard} only controls the miscoverage rate for an individual $\predictionset_n(X)$ rather than all the rates for the \emph{entire} sequence of prediction sets with high probability. Overall, we want to ensure that the miscoverage rates approach $\alpha$ as $n$ increases, while remaining below it for all $n$.

While miscoverage control is important in certain applications, there are other relevant types of error control for $\{ \predictionset_n(X) \}$ that we want to consider, such as false negative rate control in multilabel classification. We consider the chosen error metric as the \emph{risk} of a prediction set \cite{angelopoulos2024conformal}.

In this paper, we propose an approach that jointly  address all these limitations. Our contributions are:
\begin{enumerate}
    \item We show that one can construct a sequence of prediction sets $\{ \predictionset_n(X) \}$ for which the risks will be no greater than $\alpha$ and approach it from below with high probability.
    \item We establish a matching lower bound that allows us to prove our rates are asymptotically optimal.
    \item 
Finally, we specialize our results to the case of test-time distribution shifts and illustrate them with numerical experiments.
\end{enumerate}

\begin{figure*}
    \centering
    \begin{subfigure}[b]{0.45\textwidth}
        \centering
        \includegraphics[width=\textwidth]{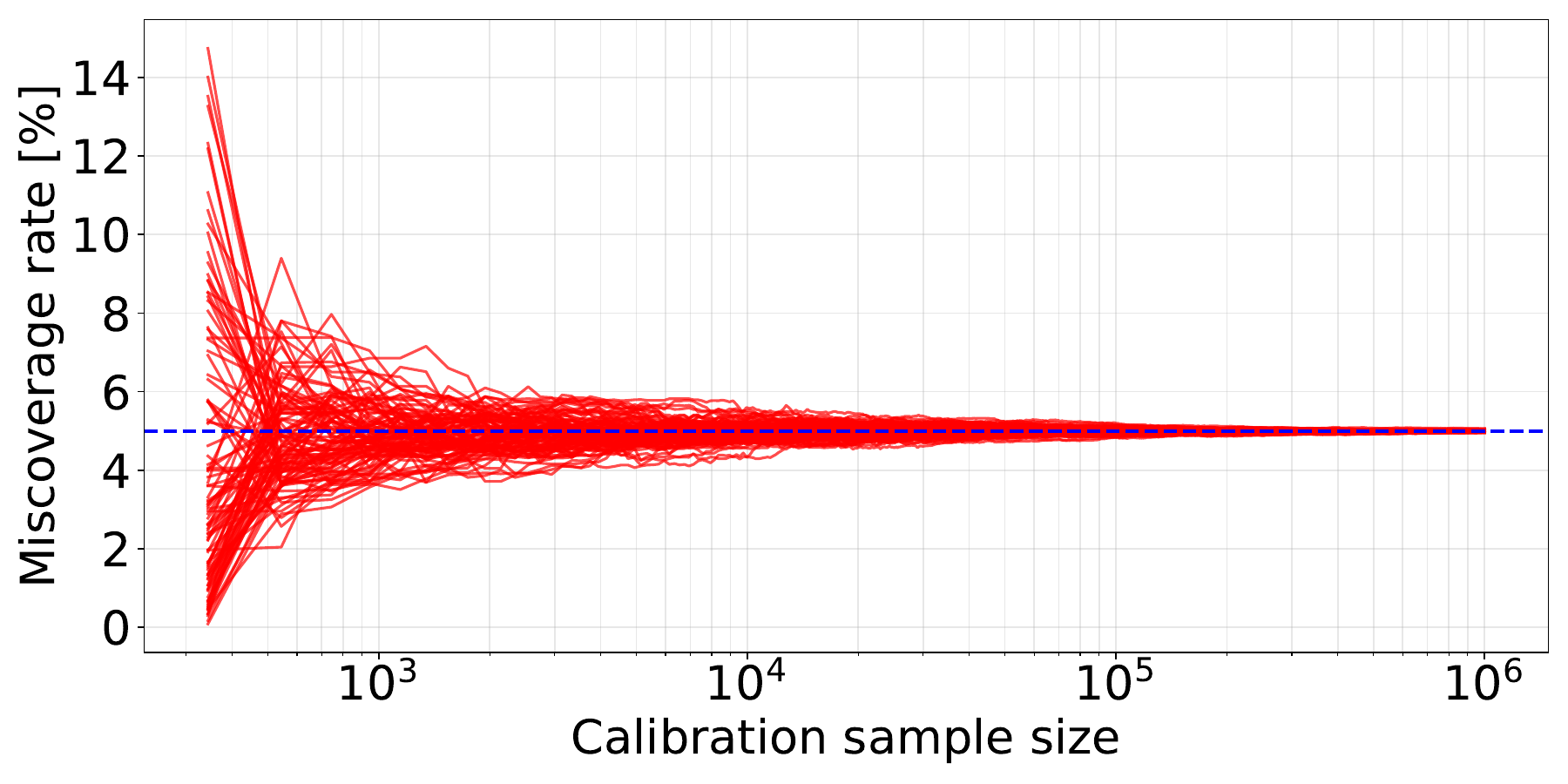} 
        \caption{Standard method}
        \label{fig:1a}
    \end{subfigure}
    \hfill
    \begin{subfigure}[b]{0.45\textwidth}
        \centering
        \includegraphics[width=\textwidth]{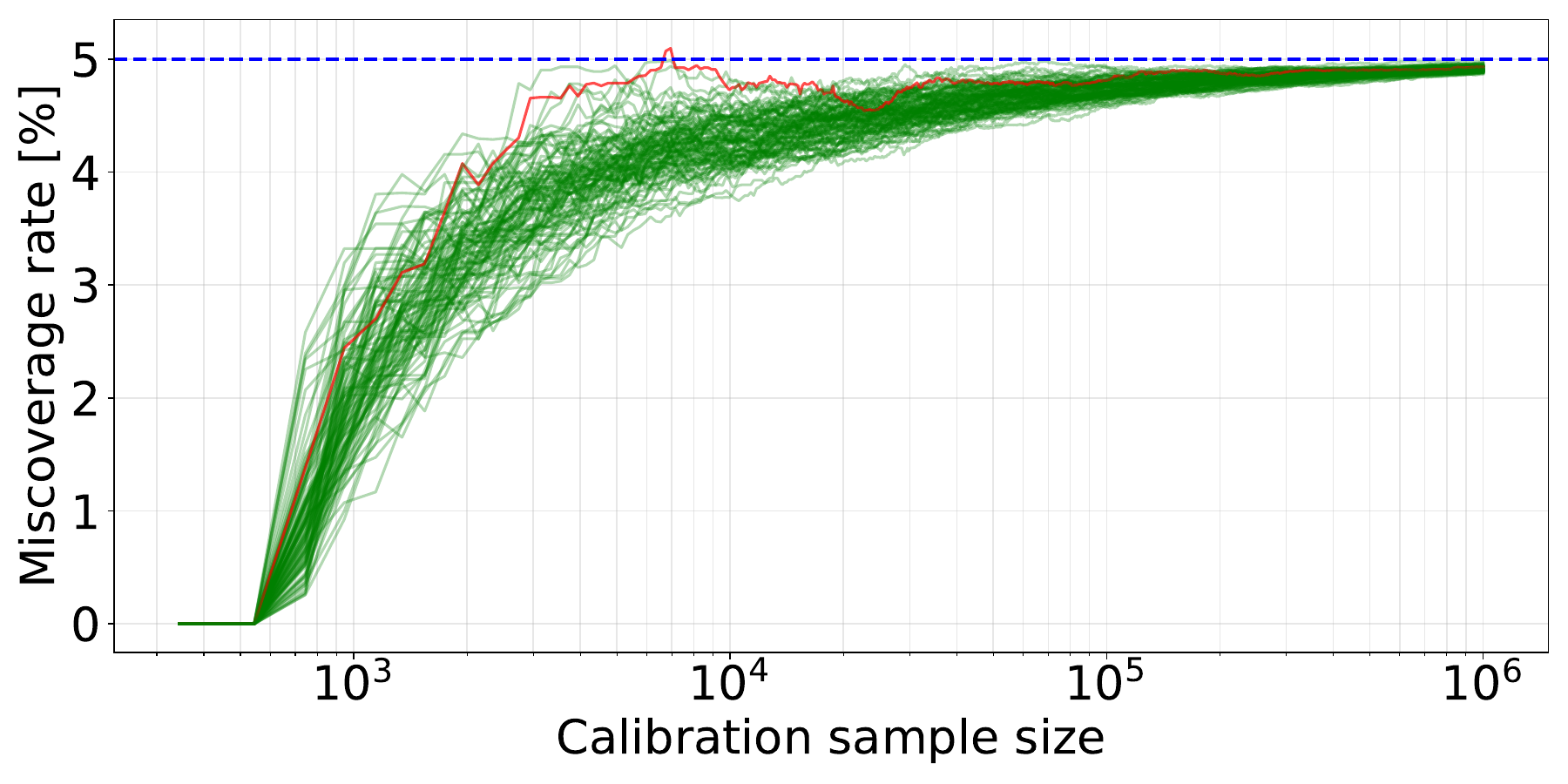} 
        \caption{Anytime-valid method with $\delta=10\%$.}
        \label{fig:1b}
    \end{subfigure}
    \caption{Miscoverage rates of prediction sets $\{ \predictionset_n(X) \}$ versus calibration sample size $n$. Each line corresponds to a particular draw of calibration data and the miscoverage rates are evaluated conditional on this data at each $n$. A \textcolor{mygreen}{\bf green} line indicates that \emph{all} the rates fall below a specified level $\alpha = 5 \%$, while \textcolor{red}{\bf red} indicates a failure to achieve this. The anytime-valid method proposed herein is ensured to achieve it for $1-\delta = 90\%$ of all draws of calibration data. Details are given in Section~\ref{sec:synthetic}}
    \label{fig:1}
\end{figure*}

\section{Problem Formulation}

A prediction set is a subset of the domain of the outcome variable, i.e., $\predictionset(X) \subseteq \mathcal{Y}$. We assume that the calibration points $(X_1, Y_1), (X_2, Y_2), \dots$ and test point $(X,Y)$ are drawn independently and identically. We let $\predictionset_n(X)$ be a function of the calibration data $\calibration[n]$.

\subsection{Risk of a Prediction Set}

The prediction error of $\predictionset(X)$ is quantified by a loss function $\loss(\predictionset(X),Y)$. The \emph{risk} of the \emph{given} prediction set $\predictionset_n(X)$ is defined as:
\begin{equation}
\E[\loss\left(\predictionset_{n}(X),Y\right) \: | \: \calibration[n] ],
\label{eq:risk_dataconditional}
\end{equation}
which is a calibration-conditional risk.
\begin{example}
In regression and single-label classification problems, the  miscoverage event is a common loss function $\loss(\predictionset(X),Y) = \mathbf{1}\{Y \notin \predictionset(X)\}$. In this case, \eqref{eq:risk_dataconditional} is the miscoverage rate of the given prediction set, i.e.,
\begin{equation}
\mathbb{P}(Y \not \in \predictionset_{n}(X) \: | \: \calibration[n]),
\label{eq:miscoveragerate_dataconditional} 
\end{equation}
\end{example}

\begin{example}
In multilabel classification problems, where $\mathcal{Y}=\mathcal{P}(\{1,\dots,K\})$ denotes the set of possible label subsets, the fraction of true labels missed by the prediction set 
\[
\loss(\predictionset(X),Y)= 1-\frac{|\predictionset(X) \cap Y|}{|Y|}
\]
is a suitable loss function. Then 
\eqref{eq:risk_dataconditional} is the false negative rate of the given prediction set.
\end{example}

We shall assume that the chosen loss function is \emph{monotone} in prediction set size, i.e.,
\[
\loss(\predictionset(X),Y)\ge \loss(\predictionset'(X),Y), \quad \forall \predictionset \subseteq \predictionset'\subseteq \mathcal{Y}
\]
and \emph{bounded} in $[0,B]$.

We can also consider the risk under known distribution shifts. That is, when calibration data is drawn as $\samples[n]\sim P$ but the test point is drawn from different distribution $( X,Y ) \sim P^*$ so that the test-time risk is
\begin{equation}
\E_{P^*}[\loss\left(\predictionset_{n}(X),Y\right) \: | \: \calibration[n] ].
\label{eq:risk_dataconditional_shift}
\end{equation}
It is assumed that $P^*$ is absolutely continuous with respect to $P$ and the Radon–Nikodym derivative
\begin{equation}
\omega(x,y) = \frac{\mathrm{d} P^*(x,y)}{\mathrm{d} P(x,y)}
\label{eq:importanceweight}
\end{equation}
is a known importance weight.

\subsection{Conformal Risk Control}

We consider a family of prediction sets $\predictionset_\threshold:\mathcal{X}\to\mathcal{P}(\mathcal{Y})$ indexed by $\threshold\in \Threshold$, where $\Threshold$ is totally ordered and the family is nested such that
\[
\predictionset_{\threshold}(X)\subseteq\predictionset_{\threshold'}(X), \; \quad \threshold \leq \threshold'
\]
and all $X$. It is assumed that there exists a  $\thresholdmax$ such that
$\loss(\predictionset_{\thresholdmax}(X),Y)\leq \alpha$, for all $\sample$. Note that $\predictionset_{\thresholdmax}(X)$ may equal  $\mathcal{Y}$ and is therefore uninformative. 

\begin{example}
In regression problems, we may use score function, e.g., $s(X,y) = |y - f(X)|$ given a trained model $f(X)$, and then set
\begin{equation}
\predictionset_\threshold(X)=\{ y : s(X,y) \leq \threshold \}.
\label{eq:predictionset_scorebased}
\end{equation}
\end{example}
\begin{example}
In multilabel classification problems, where $f:\mathcal{X}\to[0,1]^K$, we may use
\[
\predictionset_\threshold(X)=\{ k : f_k(X) \ge 1-\threshold \},
\]
for $\threshold \in [0,1]$.
\end{example}

In conformal prediction and risk control, we specify a \emph{tolerated} risk level $\alpha$ and set a data-dependent threshold $\threshold_n$ using the calibration set $\calibration[n]=\samples[n]$. Specifically, choose the smallest $\threshold$ subject to a constraint:
\begin{equation}
\boxed{\threshold_n=\inf \left\{ \lambda : \Risk_n(\lambda) \le \alpha-\correction_n \right\},}
\label{eq:correctionterm}
\end{equation}
where
\[
\Risk_n(\lambda)=\frac{1}{n}\sum_{i=1}^n \loss\left(\predictionset_{\threshold}(X_i),Y_i\right)
\]
is the empirical risk and $\correction_n \geq 0$ is some \emph{correction} term. The aim of \eqref{eq:correctionterm} is to produce the tightest prediction set $\predictionset_{\threshold_n}(X)$ with a risk \eqref{eq:risk_dataconditional} that is no greater than a specified $\alpha$, with high probability. That is, achieve  \emph{risk control}.

\begin{example}
When the risk is the  miscoverage rate, the standard correction term is \cite{angelopoulos2023conformal}:
\begin{equation}
    \correction_n = \frac{\lceil (1-\alpha)(n+1)\rceil}{n} - (1-\alpha).
\label{eq:standardcorrection}
\end{equation}
This choice only ensures that the miscoverage rate of  prediction sets is below $\alpha$ \emph{when averaged over all $n$-sized calibration sets}, i.e., marginal risk control:
\begin{equation*}
\E\big[ \: \mathbb{P}(Y \not \in \predictionset_{\threshold_n}(X) \: | \: \calibration[n]) \: \big] \; \leq \; \alpha.
\end{equation*}
More recently,  \citet{duchi2025sampleconditional} showed that the correction term
\begin{equation}
    \correction_n= \frac{4 \log \frac{1}{\delta}}{3n}
    + \sqrt{\Big(\frac{4}{3 n} \log \frac{1}{\delta}\Big)^2 +
      \frac{2 \alpha(1 - \alpha)}{n}
      \log \frac{1}{\delta}},
\label{eq:duchi}
\end{equation}
where $\delta \in (0,1)$, ensures that  $\predictionset_{\threshold_n}(X)$ has a calibration-conditional miscoverage rate no greater than $\alpha$ with probability at least $1 - \delta$, i.e.,
  \begin{equation*}
    \mathbb{P}(Y \notin \predictionset_{\threshold_n}(X) \mid \calibration[n]) \leq \alpha.
    \end{equation*}
This strengthens the miscoverage control of the prediction set. 
\end{example}

\begin{remark}
If the set inside the infimum in \eqref{eq:correctionterm} is empty, we define $\threshold_n=\thresholdmax$. We assume that the empirical risk $\Risk_n(\threshold)$ is right-continuous in $\threshold$, 
meaning that for every $\threshold$ and $n$,
\[
\lim_{\epsilon \to 0^+} \Risk_n(\threshold + \epsilon) = \Risk_n(\threshold).
\]
\end{remark}

\subsection{Risk Control with Anytime Validity}

In this paper, we consider cumulative data collection processes such that $n$ in $\calibration[n]$ increases and yields a sequence of prediction sets $\{ \predictionset_{\threshold_n}(X) \}$. This motivates the following definition.

\begin{definition}
Let $\loss$ be a loss function and $\alpha$ a tolerated level of risk. For any given $\delta \in (0,1)$,  a method producing a sequence of prediction sets $\{ \predictionset_{\threshold_n} (X) \}$ is said to achieve \emph{anytime-valid risk control}, if 
\begin{equation}
\boxed{\mathbb{P}\Big(  \forall n : \; \E[\loss\left(\predictionset_{\threshold_n}(X),Y\right) \: | \: \calibration[n]] \: \leq \: \alpha\Big)  \; \geq \; 1-\delta.}
\label{eq:riskcontrol_dataconditional_anytime}
\end{equation}
That is, with a probability of at least $1-\delta$, we have a sequence of prediction sets for which \emph{all} the risks are no greater than the tolerance $\alpha$. Additionally, we say that the risk control is \emph{asymptotically tight} if the risks approach the specified level $\alpha$ from below.
\end{definition}

The main question we aim to address is how to construct a \emph{sequence} of prediction sets $\{ \predictionset_{\threshold_n}(X) \}$ using \eqref{eq:correctionterm} that achieves anytime valid risk control and is asymptotically tight.

\section{Related Work}

Conformal prediction has had a significant impact in machine learning and overviews are provided in \cite{vovk2005algorithmic,angelopoulos2023conformal,angelopoulos2025theoreticalfoundationsconformalprediction}. \citet{pmlr-v25-vovk12} studied miscoverage rate of $\predictionset_{\lambda_n}(X)$ when \emph{conditioned} on $\calibration[n]$ and a distributional characterization is provided in  \cite{hulsman2022distribution,MARQUESF2025110350}, which can be used to improve miscoverage control \cite{zwart2025probabilisticconformalcoverageguarantees} as alternative to \eqref{eq:duchi}. 
\citet{duchi2025sampleconditional} showed that using \eqref{eq:duchi} when the scores are pairwise distinct also yields a lower bound,
  \begin{equation*}
    \alpha - \frac{1}{n}-2\gamma_n
    \le \mathbb{P}(Y_{n + 1} \notin \predictionset_n(X_{n+1}) \mid \calibration[n])
    \le \alpha,
  \end{equation*}
that holds with probability of at least $1-2 \delta$.  Calibration-conditional miscoverage control under distribution shifts was recently achieved in
\cite{pournaderi2026trainingconditional}. Miscoverage control was generalized to marginal and calibration-conditional risk control in \cite{angelopoulos2024conformal} and  \cite{RCPS10.1145/3478535}, respectively.

A large body of work on calibration-conditional conformal prediction under the i.i.d.\ setting relies on classical concentration inequalities, see, for instance \cite{pmlr-v25-vovk12, duchi2025sampleconditional, RCPS10.1145/3478535}. 
While most prior work relies on concentration inequalities for a fixed sample size, such tools are not always well suited to  sequential settings. Anytime-valid concentration bounds, which hold uniformly over time, provide a natural alternative and have been studied for several decades, dating back to at least \cite{darlingd9fc6385-8ffb-31ca-a197-d0221f3b67d5}. These bounds have been refined and extended over the years \cite{Howard2018TimeuniformNN, howard2022sequentialestimation, 10.1093/jrsssb/qkad009,offpolicy10.1145/3643693}. In certain settings, asymptotic rates on the order of $O\left(\sqrt{n^{-1} \log \log n}\right)$ are achieved, and this asymptotic rate is known to be optimal \cite{pmlr-v247-duchi24a}.

Recently, \citet{xu2024active}, as part of deriving labeling policies and predictors, showed that if one can construct a suitable martingale for the risk of interest then there exists a  $\lambda_n$ that achieves anytime valid risk control, provided the loss function is bounded. We significantly extend the applicability of anytime risk control by providing an interpretable form of  $\lambda_n$ as in \eqref{eq:correctionterm}; establishing the asymptotic tightness of the resulting risk control; providing  rates of convergence of the risk towards the chosen $\alpha$; and generalizing the applicability to risk control under distribution shifts.

\section{Main Results}

We now turn to setting the correction term in  \eqref{eq:correctionterm} such that the resulting sequence 
$\{ \predictionset_{\threshold_n}(X) \}$ achieves anytime-valid risk control and is asymptotically tight. The threshold sequence $\{ \threshold_n \}$ will be non-increasing and, when necessary, we use a running minimum. The derivations of the results are provided in Section~\ref{sec:derivations}.

\subsection{Anytime-Valid Conformal Risk Control}

 We let $m\in\mathbb{N}$ and $\delta\in(0,1)$. For notational convenience, define the following two functions
\begin{align*}
    h_{B,m,\delta}(v) \;&=\; 2\,\log\left(\log_2\left( \frac{\max\{v,m\}}{m}\right)+1\right)\\
    &+\; \log(\pi^2/(6\delta)),
\end{align*}
and
\begin{align*}
f_{B,m,\delta}(v) \;=&\; 1.44\,\sqrt{v\,h_{B,m,\delta}(v)} \;+\; 2.42\, B\,h_{B,m,\delta}(v).
\label{eq:radii_function}
\end{align*}
\begin{theorem} 
\label{thm:main_riskcontrol}

    Let $\loss$ be an arbitrary loss function bounded in $[0,B]$, monotone in $\threshold$, and right-continuous in $\threshold$. Construct the prediction sets according \eqref{eq:correctionterm} and set the correction term as
 \begin{equation}
    \correction_n=  f_{B,m^*,\delta}\big(\alpha (B-\alpha)n\big)/n, 
\label{eq:correctionterm_riskcontrol}
 \end{equation}
 where 
\[
m^* \;=\; \min\Bigl\{\, m' \in \mathbb{N} : f_{B,m',\delta}\big(\alpha (B-\alpha)m'\big)/m' \le \alpha \,\Bigr\}.
\]
Then the resulting sequence of prediction sets $\{ \predictionset_{\lambda_n}(X) \}$ achieves anytime-valid risk control \eqref{eq:riskcontrol_dataconditional_anytime}.
\end{theorem}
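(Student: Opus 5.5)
Our plan is to reduce anytime-valid risk control to a single time-uniform concentration event at one deterministic ``oracle'' threshold, and then transfer it to the data-dependent $\threshold_n$ through monotonicity. Let $\Risk(\threshold)=\E[\loss(\predictionset_\threshold(X),Y)]$ denote the population risk. It is non-increasing and, by dominated convergence and the boundedness of $\loss$, right-continuous in $\threshold$. Define
\[
\threshold^*=\inf\{\threshold : \Risk(\threshold)\le\alpha\},
\]
which exists since $\Risk(\thresholdmax)\le\alpha$. Right-continuity gives $\Risk(\threshold^*)\le\alpha$, and $\Risk(\threshold)>\alpha$ for every $\threshold<\threshold^*$.

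The key observation is that failure at time $n$ means $\Risk(\threshold_n)>\alpha$. This forces $\threshold_n<\threshold^*$, and by monotonicity of the empirical risk $\Risk_n(\threshold^{*-})\le \Risk_n(\threshold_n)\le\alpha-\correction_n$, where $\Risk_n(\threshold^{*-})$ is the left limit at $\threshold^*$. Let $L_i=\loss(\predictionset_{\threshold^{*-}}(X_i),Y_i)$, the limit of the losses from the left. These are i.i.d., lie in $[0,B]$, and have mean $\mu=\lim_{\threshold\uparrow\threshold^*}\Risk(\threshold)\ge\alpha$. Their variance is at most $\mu(B-\mu)$.

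It therefore suffices to show
\[
\mathbb{P}\bigl(\exists n:\ \textstyle\sum_{i\le n}(\mu-L_i)\ge n\correction_n\bigr)\le\delta.
\]
The running-minimum convention is harmless here, since the same event controls it.

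To prove this we would invoke a time-uniform empirical-Bernstein/Freedman inequality with stitching (Howard et al.), whose boundary is exactly $f_{B,m,\delta}(V_n)$ with $V_n$ the accumulated variance. The term $2\log(\log_2(\cdot)+1)+\log(\pi^2/(6\delta))$ comes from a union bound over geometric epochs $[m2^k,m2^{k+1})$ with weights $6\delta/(\pi^2(k+1)^2)$. Within each epoch one applies Freedman's maximal (Ville) inequality for the supermartingale $\exp(\eta S_n-\psi_B(\eta)V_n)$, with $\eta$ tuned to the epoch. This yields the constants $1.44$ and $2.42$ after optimisation; we take this stitched bound as given. The bound applies to the centered variables $\mu-L_i\le\mu\le B$ with predictable variance proxy $V_n=n\,\mathrm{Var}(L)$, and it holds uniformly over $n\ge1$ for the chosen $m=m^*$.

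The main obstacle is that the variance proxy in $\correction_n$ is $\alpha(B-\alpha)n$, while the true variance is at most $\mu(B-\mu)n$ with $\mu\ge\alpha$ possibly different. When $\alpha\ge B/2$, or whenever $\mu(B-\mu)\le\alpha(B-\alpha)$, monotonicity of $f$ in $v$ handles this directly. Otherwise we would use a reduction. Replace $L_i$ by a thinned or truncated variable $\tilde L_i\le L_i$ with mean exactly $\alpha$, for example $\tilde L_i=L_i\,\xi_i$ with independent Bernoulli $\xi_i$, or via randomised interpolation $\predictionset_{\threshold^*}$ versus $\predictionset_{\threshold^{*-}}$. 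Then $\Var(\tilde L)\le\alpha(B-\alpha)$, and the event $\sum L_i\le n(\alpha-\correction_n)$ implies $\sum(\alpha-\tilde L_i)\ge n\correction_n$.

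Finally, the choice of $m^*$ ensures $\correction_n\le\alpha$ for $n\ge m^*$, so the constraint set is nonempty. For $n<m^*$, the constraint $\Risk_n\le\alpha-\correction_n<0$ is infeasible and $\threshold_n=\thresholdmax$, whose risk is at most $\alpha$ trivially. Combining these cases on the single event of probability at least $1-\delta$ gives \eqref{eq:riskcontrol_dataconditional_anytime}.
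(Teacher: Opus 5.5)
Your argument is the paper's proof in contrapositive form. It uses the oracle threshold $\hat{\threshold}(\alpha)$ and the left-limit losses $\loss_{\hat{\threshold}(\alpha)^-}$. It then reduces to variables in $[0,B]$ with mean exactly $\alpha$: your randomised interpolation or Bernoulli thinning plays the role of the paper's mixture $\xi_i(\alpha)$ with weight $\pi(\alpha)$, so the variance proxy is $\alpha(B-\alpha)t$ at scale $B$. One stitched time-uniform bound is applied, and the conclusion is transferred to $\threshold_n$ by monotonicity and right-continuity of $\Risk_n$. The case split on $\alpha\ge B/2$ is unnecessary, because the interpolation works in every case. The separate treatment of $n<m^*$ is also unnecessary, because the concentration event is uniform over all $t\ge 1$.

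There is one step where you misquote the tool, and it matters for the correction as literally stated. Corollary~\ref{cor:explicit_stitching_pi2} bounds $M_t$ by $\tilde{\mathcal{S}}_\delta(\max\{V_t,m^*\})=f_{B,m^*,\delta}(\max\{V_t,m^*\})$, not by $f_{B,m^*,\delta}(V_t)$.
\begin{itemize}
\item The function $f$ applies the $\max$ only inside $h$. So for $V_t<m^*$ you get $f(V_t)=1.44\sqrt{V_t h_0}+2.42Bh_0<f(m^*)$, where $h_0=\log(\pi^2/(6\delta))$.
\item The stitching argument does not certify this smaller boundary. For $V_t<m^*$ it uses only the epoch-$0$ line, which is tuned near $v=m^*$; that line is bounded by $f(m^*)$ but not by $f(V_t)$.
\item With $V_n=\alpha(B-\alpha)n$ and $\alpha(B-\alpha)<1$ (e.g.\ miscoverage with $B=1$), this happens for every $n\in[m^*,\,m^*/(\alpha(B-\alpha)))$.
\item On that range the definition of $m^*$ gives $\correction_n\le\alpha$, so the constraint set can be nonempty. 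Your key event $\{\exists n:\sum_{i\le n}(\alpha-\tilde L_i)\ge n\correction_n\}$ is therefore not bounded by $\delta$ for those $n$.
\end{itemize}
The paper's own proof avoids this by defining $\threshold_t$ with $f_{B,m^*,\delta}(\max\{m^*,V_t\})/t$ rather than with \eqref{eq:correctionterm_riskcontrol} literally. You should do the same; the two corrections coincide once $\alpha(B-\alpha)n\ge m^*$. With that change your argument is complete.
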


\begin{corollary}\label{cor:anytimeconfpred}
In conformal prediction, where $\loss$ is the miscoverage loss, anytime valid miscoverage rate control is achieved by setting $B=1$ in \eqref{eq:correctionterm_riskcontrol}.
\end{corollary}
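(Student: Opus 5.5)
The statement to prove is Corollary~\ref{cor:anytimeconfpred}. It follows by specializing Theorem~\ref{thm:main_riskcontrol}, which I take as given, to the miscoverage loss. So the plan is to check that the miscoverage loss meets every hypothesis of Theorem~\ref{thm:main_riskcontrol} with $B=1$, and then read off the conclusion.

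\textbf{Step 1: boundedness.} Take $\loss(\predictionset(X),Y)=\mathbf{1}\{Y\notin\predictionset(X)\}$. This takes values in $\{0,1\}\subseteq[0,1]$, so it is bounded with $B=1$.

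\textbf{Step 2: monotonicity.} If $\predictionset\subseteq\predictionset'$, then $Y\in\predictionset$ implies $Y\in\predictionset'$. Hence $\mathbf{1}\{Y\notin\predictionset'\}\le\mathbf{1}\{Y\notin\predictionset\}$. Combined with the nestedness of the family $\predictionset_\threshold$, the loss is non-increasing in $\threshold$.

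\textbf{Step 3: right-continuity.} This is the only point needing care. The theorem requires right-continuity of the empirical risk $\Risk_n(\threshold)$, as in the Remark. For the score-based sets \eqref{eq:predictionset_scorebased}, $\mathbf{1}\{s(X_i,Y_i)>\threshold\}$ is a right-continuous step function of $\threshold$, since $\{\threshold: s\le\threshold\}$ is closed on the right. A finite average of right-continuous functions is right-continuous. For general nested families the paper already imposes right-continuity as a standing assumption in the Remark, so nothing further is needed there.

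\textbf{Step 4: existence of $\thresholdmax$ and conclusion.} We still need a $\thresholdmax$ with $\loss(\predictionset_{\thresholdmax}(X),Y)\le\alpha$. Taking $\predictionset_{\thresholdmax}=\mathcal{Y}$ gives zero loss, so this is a standing assumption already satisfied. With all hypotheses checked, Theorem~\ref{thm:main_riskcontrol} applies with $\correction_n=f_{1,m^*,\delta}(\alpha(1-\alpha)n)/n$.

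Its conclusion is \eqref{eq:riskcontrol_dataconditional_anytime} with risk $\E[\mathbf{1}\{Y\notin\predictionset_{\threshold_n}(X)\}\mid\calibration[n]]$. This equals the calibration-conditional miscoverage rate \eqref{eq:miscoveragerate_dataconditional}. Therefore, with probability at least $1-\delta$, the miscoverage rate stays below $\alpha$ for all $n$ simultaneously, which is the claim.
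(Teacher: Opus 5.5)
Your proposal is correct and follows the paper's route. The paper gives no separate proof because the corollary is exactly Theorem~\ref{thm:main_riskcontrol} specialized to the indicator loss. That loss is bounded in $[0,1]$ (so $B=1$), non-increasing and right-continuous in $\threshold$, and its calibration-conditional risk is the miscoverage rate \eqref{eq:miscoveragerate_dataconditional}, as you verify.
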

\begin{remark}
    
\label{rem:smallncases}
    Note that for $n<m^*$, we would have $\correction_n>\alpha$ hence in~\eqref{eq:correctionterm} we have the infimum of an empty set. By definition, this results in 
    $\threshold_n=\thresholdmax$ and yield an uninformative prediction set at that sample $n$. 
\end{remark}
\begin{remark}
    \label{rem:altconstructions}
    There are alternative ways to construct valid sequences of correction terms. For further details, we refer to Remark~\ref{app:fconstruction}.
\end{remark}

Assuming that the empirical risk $\Risk_n$ does not exhibit large jumps, we can show that the choice of correction terms $\correction_n$ is tight up to a controlled error. For a right-continuous function $l$, we define the jump at $\threshold$ by
\begin{equation*}
    J(l,\lambda) = \underset{\epsilon \to 0^+}{\lim} l(\threshold-\epsilon) - l(\threshold).
\end{equation*}

\begin{proposition}
\label{prop:lowerbound}
    In the setting of Theorem~\ref{thm:main_riskcontrol}, further let
    \[
    d_n=\sup_{\threshold\in \Threshold} J(\Risk_n,\lambda).
    \]
    If $d_n $ converges to $ 0$, then the sequence $\{ \predictionset_{\lambda_n}(X) \}$ is asymptotically tight with the risk approaching $\alpha$ at the rate $O\left(\max\left\{d_n,B\sqrt{n^{-1}\log\log n}\right\}\right)$. 
    
    In particular, the sequence  $\{ \predictionset_{\lambda_n}(X) \}$ satisfies
  \begin{equation*}
\begin{split}
&\mathbb{P}\big(  \forall n : \alpha-k_n\le
\E[\loss\left(\predictionset_{\threshold_n}(X),Y\right)| \calibration[n]] \leq \alpha\big) \\ &\geq 1-2\delta,
\end{split}
\end{equation*}
    where we 
    define
\begin{equation*}
\begin{split}
 k_n &=d_{g(n)}+\correction_{g(n)}+\correction'_{g(n)}(\delta_{\lfloor \log n\rfloor}), \\
\gamma_n'(\delta)&=\frac{B}{n} \log \frac{1}{\delta} + \sqrt{2 \frac{\alpha(B-\alpha)}{n}\log \frac{1}{\delta} +\! \left(\frac{B}{n}  \log \frac{1}{\delta}\right)^2},
\end{split}
\end{equation*}
$g(n)=2^{\lfloor\log_2n \rfloor}$ and $\delta_i=\delta/(i(i+1))$.
\end{proposition}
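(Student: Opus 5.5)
The upper bound is exactly Theorem~\ref{thm:main_riskcontrol}: with probability at least $1-\delta$, all risks $\Risk(\threshold_n):=\E[\loss(\predictionset_{\threshold_n}(X),Y)\mid\calibration[n]]$ are at most $\alpha$. It therefore remains to prove a uniform lower bound $\Risk(\threshold_n)\ge\alpha-k_n$ on an event of probability at least $1-\delta$. A union bound then gives the joint statement with probability $1-2\delta$. Here $\Risk(\threshold)=\E[\loss(\predictionset_\threshold(X),Y)]$ denotes the population risk, which is non-increasing in $\threshold$.

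\emph{Step 1 (empirical risk at the chosen threshold).} By the definition \eqref{eq:correctionterm} and right-continuity, $\Risk_n(\threshold_n)\le\alpha-\correction_n$. Minimality of $\threshold_n$ gives $\Risk_n(\threshold)>\alpha-\correction_n$ for every $\threshold<\threshold_n$. Letting $\threshold\uparrow\threshold_n$, the left limit satisfies $\Risk_n(\threshold_n^-)\ge\alpha-\correction_n$, hence $\Risk_n(\threshold_n)\ge\alpha-\correction_n-d_n$. The case $\threshold_n=\thresholdmax$ with an empty set occurs only for $n<m^*$, where $\correction_n>\alpha$ makes the bound trivial. If a running minimum is used, the same argument applies at the index achieving the minimum.

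\emph{Step 2 (blocked concentration at a fixed level).} Since $\threshold_n$ is data dependent, I work at a deterministic threshold. Fix an epoch $i=\lfloor\log_2 n\rfloor$, so $g(n)=2^i$. Define $\threshold^*_i=\inf\{\threshold:\Risk(\threshold)\le\alpha-\eta_i\}$, where $\eta_i=\correction_{2^i}+\correction'_{2^i}(\delta_i)$. At this single fixed threshold the losses are i.i.d.\ in $[0,B]$. Near risk level $\alpha$ their variance is at most $\alpha(B-\alpha)$, since for a $[0,B]$ variable with mean $\mu$ the variance is at most $\mu(B-\mu)$. Bernstein's inequality with confidence $\delta_i$ then gives $\Risk_{2^i}(\threshold^*_i)\le \Risk(\threshold^*_i)+\correction'_{2^i}(\delta_i)$ with probability at least $1-\delta_i$; this is precisely the form of $\correction'$. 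A union bound over epochs costs $\sum_i\delta_i=\delta$. Each epoch should be controlled by a single fixed-time bound applied at the epoch's start $2^i$. This is why $k_n$ is evaluated at $g(n)$, and why the index $\lfloor\log n\rfloor$ enters $\delta$ (up to re-indexing of the epochs).

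\emph{Step 3 (combining).} On the good event, suppose $\Risk(\threshold_n)<\alpha-k_n$. By monotonicity of $\Risk$ this forces $\threshold_n$ to lie at or above the corresponding fixed threshold. Monotonicity of $\Risk_n$ in $\threshold$ then gives $\Risk_n(\threshold_n)$ at most the empirical risk at that fixed threshold, which Step 2 bounds by $\alpha-k_n+\correction'$. Comparing with Step 1, which bounds $\Risk_n(\threshold_n)$ below by $\alpha-\correction_n-d_n$, yields a contradiction once the terms are matched at $g(n)$. To handle the mismatch between $n$ and $2^i$, I use that $\correction_n$ is essentially non-increasing, so $\correction_n\le\correction_{g(n)}$. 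I also use that $\threshold_n$ is non-increasing (running minimum), so the relevant fixed threshold can be chosen per epoch.

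The main obstacle is this within-epoch step. A single Bernstein bound at time $2^i$ does not directly control $\Risk_n$ at an unrelated time $n\in[2^i,2^{i+1})$. If that fails, I would replace Step 2 by a time-uniform (maximal/Freedman-type) Bernstein bound within each epoch, which only changes constants. I would also replace $d_n$ by $d_{g(n)}$ through monotone sandwiching. The rate then follows directly: $\correction_n=O(B\sqrt{n^{-1}\log\log n})$ because $h=O(\log\log n)$, and $\correction'_n(\delta_{\lfloor\log n\rfloor})$ has the same order because $\log(1/\delta_i)=O(\log\log n)$. Hence $k_n=O(\max\{d_n,B\sqrt{n^{-1}\log\log n}\})$, and $k_n\to0$ gives asymptotic tightness.
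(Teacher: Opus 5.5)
Your proposal has two gaps: one concerns how the bound gets from dyadic times to all $n$, the other the contradiction at a single dyadic time.

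\textbf{Within-epoch step.} The step you flag as the main obstacle is a genuine gap, and your fallback would not give the stated bound. A time-uniform Bernstein bound within each epoch changes the form of $\gamma'$. A per-$n$ argument also produces $d_n$ rather than $d_{g(n)}$, and $d_n$ cannot in general be bounded by $d_{g(n)}$, since the samples $g(n)<i\le n$ can create new jumps. The missing idea is that nothing needs to be controlled at non-dyadic times. Because $n\mapsto\lambda_n$ is non-increasing and $R$ is non-increasing in $\lambda$, the population risk $n\mapsto R(\lambda_n)$ is \emph{non-decreasing}. So it suffices to show $R(\lambda_{2^i})>\alpha-d_{2^i}-\gamma_{2^i}-\gamma'_{2^i}(\delta_i)$ simultaneously for all $i$, via a union bound with $\sum_i\delta_i\le\delta$. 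Then $R(\lambda_n)\ge R(\lambda_{g(n)})\ge\alpha-k_n$ for every $n$. This is exactly why $k_n$ is indexed by $g(n)$, and it removes any need for $\gamma_n\le\gamma_{g(n)}$ or for statements about $R_n$ with $n\neq 2^i$. This is the route the paper takes. With a running minimum, run the dyadic argument for the raw infimum $\tilde\lambda_{2^i}$. Since $\lambda_n\le\tilde\lambda_{2^i}$ for all $n\ge 2^i$, the bound transfers, and you need not track the index attaining the minimum.

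\textbf{Contradiction at a dyadic time.} Your deterministic threshold $\lambda^*_i$ is the right device. It is more careful than the paper, which applies the fixed-$\lambda$ bound of Lemma~\ref{lem:help-chernoff} directly at the data-dependent $\lambda_{2^i}$. However, your Step~3 chain does not close. Since $\lambda^*_i$ is defined at level $\alpha-\eta_i$, Step~2 gives $R_{2^i}(\lambda^*_i)<\alpha-\eta_i+\gamma'_{2^i}(\delta_i)=\alpha-\gamma_{2^i}$, not $\alpha-k_n+\gamma'$. That is compatible with Step~1's $R_{2^i}(\lambda_{2^i})\ge\alpha-\gamma_{2^i}-d_{2^i}$ whenever $d_{2^i}>0$, so there is no contradiction. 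Putting $d_{2^i}$ into the level would make the threshold random and invalidate Step~2. Instead, take $n=2^i$, work on the event $R_n(\lambda^*_i)<R(\lambda^*_i)+\gamma'_n(\delta_i)$, and split cases:
\begin{itemize}
\item If $\lambda_n<\lambda^*_i$, then $R(\lambda_n)>\alpha-\eta_i$ by definition of the infimum.
\item If $\lambda_n>\lambda^*_i$, minimality of $\lambda_n$ gives $R_n(\lambda^*_i)>\alpha-\gamma_n$ directly, with no jump term. This contradicts $R_n(\lambda^*_i)<\alpha-\gamma_n$, which follows from the event and $R(\lambda^*_i)\le\alpha-\eta_i$.
\item If $\lambda_n=\lambda^*_i$, the same event reads $R(\lambda_n)>R_n(\lambda_n)-\gamma'_n(\delta_i)\ge\alpha-\gamma_n-d_n-\gamma'_n(\delta_i)$.
\end{itemize}
The jump term enters only in the equality case, and all three cases give $R(\lambda_{2^i})>\alpha-k_{2^i}$. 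If $\{\lambda:R(\lambda)\le\alpha-\eta_i\}$ is empty, the lower bound is trivial.
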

\begin{remark}
    The condition $d_n \to 0$ is satisfied in several settings. For instance, if $\Risk_n$ is continuous, then $d_n=0$. Alternatively, let $L_i(\threshold)=\loss(\predictionset_\threshold(X_i),Y_i)$ and assume that for any $\threshold\in\Threshold$ and $i$, $\mathbb{P}\left(J(L_i, \threshold) > 0 \right) = 0$. Then $d_n \overset{\text{a.s.}}{\le} B/n$ (Lemma~1, \citealp{angelopoulos2024conformal}). This can be interpreted as follows: for any fixed $\threshold$, the function $L_i$ is almost surely continuous. In other words, $L_i$ might have discontinuities, but their locations are random rather than fixed.
\end{remark}

\subsection{Anytime-Valid Risk Control under Distribution Shift}

We generalize Theorem~\ref{thm:main_riskcontrol} to the setting of conformal risk control under test-time distribution shift. The calibration data is drawn from a distribution $P$, which may differ from  distribution $P^*$ for the test data. The goal is to control the test-time risk is (\ref{eq:risk_dataconditional_shift}), assuming $\omega(x,y)$ is given \eqref{eq:importanceweight}.

We begin by defining the function
\begin{equation*}
\begin{aligned}
b_{B,m,\delta}(v)=1.44\sqrt{vh_{B,m,\delta}(v)}.\\
\end{aligned}
\end{equation*}

\begin{theorem}
\label{thm:distshiftcontrol}
Let $\loss$ be an arbitrary loss function bounded in $[0,B]$, monotone in $\threshold$, and right-continuous in $\threshold$. Define the importance weights $\omega_i := \omega(X_i,Y_i)$ using (\ref{eq:importanceweight}) and $W_n=\sum_{i=1}^n \omega_i^2$.  
Construct the prediction sets according to \eqref{eq:correctionterm} with the correction term as
    \[
    \correction_n=B\left(1-\frac{1}{n}\sum_{i=1}^n \omega_i \right)+ \frac{h_{B,m^*,\delta}(B^2W_n)}{n},
    \]
    where
\begin{equation}
\begin{split}
m^* &= \min\Big\{\, m' \in \mathbb{N} : B\Big(1-\frac{1}{m'}\sum_{i=1}^{m'} \omega_i \Big) \\
&+ \frac{b_{m',B,\delta}(B^2W_{m'})}{m'} \le \alpha \,\Big\}. 
\end{split}
\end{equation}
   Then the resulting sequence of prediction sets $\{ \predictionset_{\lambda_n}(X) \}$ achieves anytime-valid  risk control \eqref{eq:riskcontrol_dataconditional_anytime} under distribution shift \eqref{eq:risk_dataconditional_shift}. That is, with probability at least $1-\delta$, we obtain a sequence of prediction sets such that
     \[
    \E_{P^*}[\loss(\predictionset_{\threshold_n}(X),Y) \: | \: \calibration[n]] \: \leq \: \alpha.
    \]
    holds for all $n$.
\end{theorem}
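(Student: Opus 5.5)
We would mirror the argument behind Theorem~\ref{thm:main_riskcontrol}, replacing the i.i.d.\ losses by importance-weighted losses. Write $L_i(\threshold)=\loss(\predictionset_\threshold(X_i),Y_i)$. By the change of measure \eqref{eq:importanceweight}, the test-time risk satisfies $\Risk^*(\threshold):=\E_{P^*}[\loss(\predictionset_\threshold(X),Y)]=\E_P[\omega(X,Y)L(\threshold)]$. The key fixed object is the deterministic ``oracle'' threshold
\[
\threshold^*=\inf\{\threshold:\Risk^*(\threshold)\le\alpha\}.
\]
By monotonicity, $\Risk^*$ is non-increasing in $\threshold$, and by right-continuity together with dominated convergence (since $\omega L\le B\omega$ and $\E_P\omega=1$) it is right-continuous. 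Hence $\Risk^*(\threshold^*)\le\alpha$, and every $\threshold\ge\threshold^*$ is safe. If $\threshold^*$ is not attained we would work with the left limit $\threshold^*_-$, at which $\Risk^*(\threshold^*_-)\ge\alpha$ in the left-limit sense. It therefore suffices to show that, with probability at least $1-\delta$, $\threshold_n\ge\threshold^*$ for all $n$ simultaneously. Equivalently, for every $\threshold<\threshold^*$, where $\Risk^*(\threshold)>\alpha$, we need $\Risk_n(\threshold)>\alpha-\correction_n$ for all $n$.

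To get this we rewrite the constraint using $B-L_i\ge0$. Since $\omega_i\ge0$, we have $L_i\ge \omega_iL_i-B(\omega_i-1)$ whenever $\omega_i\ge1$. More simply, we use the identity
\[
\Risk_n(\threshold)=\frac1n\sum_i\omega_iL_i-\frac1n\sum_i(\omega_i-1)L_i,
\]
and bound the second sum through $L_i\le B$ on the terms with $\omega_i\ge1$ and $L_i\ge0$ on the rest. A cleaner route is to apply the same identity to $\tilde L_i=B-L_i\in[0,B]$:
\[
\frac1n\sum_i\omega_i\tilde L_i=\frac Bn\sum_i\omega_i-\frac1n\sum_i\omega_iL_i .
\]
This yields $\Risk_n(\threshold)\ge B\bigl(\tfrac1n\sum_i\omega_i-1\bigr)+\tfrac1n\sum_i\omega_iL_i-\tfrac1n\sum_i(\omega_i-1)\tilde L_i\cdots$. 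We would arrange the terms so that the event $\Risk_n(\threshold)\le\alpha-\correction_n$ forces
\[
\frac1n\sum_i\bigl(\omega_iL_i(\threshold)-\Risk^*(\threshold)\bigr)\le -\frac{b_{B,m^*,\delta}(B^2W_n)}{n}.
\]
The $B(1-\frac1n\sum\omega_i)$ term in $\correction_n$ is exactly what absorbs the discrepancy between the unweighted empirical risk and the weighted one. The increments $Z_i=\omega_iL_i(\threshold)-\Risk^*(\threshold)$ are i.i.d.\ and mean zero, with predictable/observed variance proxy bounded by $B^2\omega_i^2$. So $\sum_i Z_i$ is a martingale whose self-normalized variance process is dominated by $B^2W_n$.

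We then invoke the same stitched time-uniform concentration bound used for Theorem~\ref{thm:main_riskcontrol}, namely Howard et al.'s stitching with the boundary $b$ and the epoch weights $\pi^2/6$. Here it is applied in its self-normalized (empirical-variance, lower-tail) form: since $Z_i\ge-\Risk^*(\threshold)$ is bounded below, only a sub-Gaussian-type lower tail is needed. This explains why $b$ carries no linear $2.42Bh$ term, and why $h$ enters with a $1/n$ factor to cover the residual term. The result is $\mathbb{P}(\exists n:\sum_{i\le n}Z_i\le -b(B^2W_n))\le\delta$. This bound is needed at a single $\threshold$ only, namely at $\threshold^*$ (or at its left limit, via a limiting argument along $\threshold\uparrow\threshold^*$ and monotonicity of $L_i$), because monotonicity transfers it to all $\threshold<\threshold^*$. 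The restriction to $n\ge m^*$ is harmless: for $n<m^*$ the constraint set is empty and $\threshold_n=\thresholdmax$, which is safe.

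The main obstacle is the algebra in the middle step. We must verify that the specific correction $B(1-\bar\omega_n)+h/n$ really dominates the gap between $\Risk_n$ and the weighted empirical risk for every $\threshold$, uniformly in the sign of $\omega_i-1$. We must also check that the chosen lower-tail boundary is valid with $B^2W_n$ as the intrinsic time. If the direct inequality fails, the fallback is to define the threshold via the weighted empirical risk $\frac1n\sum\omega_iL_i$ and show that it is dominated by the stated $\Risk_n+B(1-\bar\omega_n)$ combination.
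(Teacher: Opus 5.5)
Your outer reduction matches the paper: show $\threshold_n\ge\threshold^*$ for all $n$ on one event, argue at the single oracle threshold (or its left limit), and extend to $\threshold<\threshold^*$ by monotonicity. The middle of the argument has two genuine gaps.

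\textbf{First gap: comparing unweighted and weighted risk.} You want the unweighted constraint $\Risk_n(\threshold)\le\alpha-\correction_n$ to force a large negative deviation of the weighted sum. With $\bar\omega_n=\frac1n\sum_{i\le n}\omega_i$, the term $B(1-\bar\omega_n)$ does not absorb the gap between unweighted and weighted empirical risks. Your main route, and the second half of your fallback, both need $\frac1n\sum_i\omega_iL_i(\threshold)\le\Risk_n(\threshold)+B(1-\bar\omega_n)$. This is equivalent to $\frac1n\sum_i(\omega_i-1)\bigl(L_i(\threshold)+B\bigr)\le0$. Its almost-sure limit is $\E_{P^*}[L(\threshold)]-\E_P[L(\threshold)]$, which is strictly positive exactly when the shift makes the target risk exceed the source risk. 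In fact, with the unweighted $\Risk_n$ the correction vanishes as $\bar\omega_n\to1$, so $\threshold_n$ tends to the level where the \emph{source} risk equals $\alpha$, and no argument can give target-risk control. The first half of your fallback is the right reading. The paper's proof ends with a lower bound on $\frac1t\sum_i\omega_iL_i(\hat\threshold(\alpha))$, so the threshold is calibrated with the weighted empirical risk and never compared with $\Risk_n$. The term $B(1-\bar\omega_n)$ has a different origin, explained next.

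\textbf{Second gap: the concentration step.} You invoke a time-uniform \emph{lower}-tail bound for $\sum_i(\omega_iL_i-R^*)$ with the \emph{observed} variance $B^2W_n$, justified by the increments being bounded below. Nonnegativity of $\omega_iL_i$ only gives a lower-tail bound with the predictable variance $\E_P[\omega^2L^2]$, via $e^{-x}\le1-x+x^2/2$ for $x\ge0$, and that quantity is not computable. The observed-variance supermartingale condition can fail. For example, if $\omega=0$ on a set of positive $P$-probability, then $\exp\bigl(\tau(R^*-\omega_iL_i)-\tau^2B^2\omega_i^2/2\bigr)=e^{\tau R^*}$ there, so its expectation exceeds $1$ for large $\tau$.

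The missing idea is to flip the loss and use the \emph{upper} tail of a nonnegative variable. Set $\xi_i=\omega_i(B-L_i)\ge0$ at the oracle threshold, using the paper's mixture with the left limit if the target risk jumps, so that $\E[\xi_i]=B-\alpha$. The inequality $\exp(x-x^2/2)\le1+x$ for $x\ge0$ gives $\E\bigl[\exp\bigl(\tau(\xi_i-(B-\alpha))-\tau^2\xi_i^2/2\bigr)\bigr]\le e^{-\tau(B-\alpha)}\bigl(1+\tau(B-\alpha)\bigr)\le1$ for all $\tau\ge0$. Hence $\sum_i(\xi_i-(B-\alpha))$ is sub-gamma with scale $c=0$ and variance process $\sum_i\xi_i^2\le B^2W_t$. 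Corollary~\ref{cor:explicit_stitching_pi2} then gives, for all $t$, $\sum_{i\le t}\xi_i-t(B-\alpha)<b_{B,m^*,\delta}(\max\{m^*,B^2W_t\})$. Since $\sum_i\xi_i=B\sum_i\omega_i-\sum_i\omega_iL_i$, this rearranges to $\frac1t\sum_i\omega_iL_i>\alpha-B(1-\bar\omega_t)-b_{B,m^*,\delta}(\max\{m^*,B^2W_t\})/t$.

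This is where $B(1-\bar\omega_n)$ comes from: the random $B\sum_i\omega_i$ is replaced by its mean $tB$. It is also why the boundary has no linear term. Finally, the $h_{B,m^*,\delta}(B^2W_n)/n$ in the statement should be read as $b_{B,m^*,\delta}(\cdot)/n$, as in the proof. There is no residual term for $h/n$ to cover, and $h/n$ alone is far too small to dominate fluctuations of order $\sqrt{W_n\log\log W_n}/n$.
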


\begin{remark}
Since $\mathbb{E}[\omega_i] = 1$, it follows that the term
\[
B\Big(1-\frac{1}{n}\sum_{i=1}^{n} \omega_i \Big)
\]
in the expressions above is zero in expectation. Thus classical concentration inequalities ensure that this term is typically small for large $n$, provided finite variance $\operatorname{Var}(\omega_i) < \infty$.    
In contrast, if the variance is infinite, such high-probability guarantees no longer hold, and the term may exhibit significant fluctuations even as $n$ grows.

\end{remark}

\begin{proposition}\label{prop:distshiftlower}
    In the distribution-shift setting of Theorem~\ref{thm:distshiftcontrol},  let
    \[
    d_n=\sup_{\threshold\in \Threshold} J(\Risk_n,\lambda).
    \]
If $d_n $ converges to $0$ and
    \[
    \frac{1}{n}\sum_{i=1}^{n} \omega_i \to 1 \qquad \sqrt{\frac{W_t\log\log W_t}{t^2}} \to 0,
    \]
then the sequence $\{ \predictionset_{\lambda_n}(X) \}$ is asymptotically tight.

 In particular, the sequence  $\{ \predictionset_{\lambda_n}(X) \}$ satisfies
  \begin{equation*}
\begin{split}
&\mathbb{P}\big(  \forall n : \alpha-k_n\le
\E[\loss\left(\predictionset_{\threshold_n}(X),Y\right)| \calibration[n]] \leq \alpha\big) \\ &\geq 1-2\delta,
\end{split}
\end{equation*}
where we define
\begin{equation*}
\begin{split}
    k_n &=d_{g(n)}+\correction_{g(n)}+\correction'_{g(n)}(\delta_{\lfloor \log n\rfloor}),  \\
    \gamma_n'(\delta)&=\sqrt{2 \frac{B^2W_n^2}{n^2}  \, \log \frac{1}{\delta}},
\end{split}
\end{equation*}
$g(n)=2^{\lfloor\log_2n \rfloor}$ and $\delta_i=\delta/(i(i+1))$.
\end{proposition}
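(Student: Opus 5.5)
The upper bound $\E_{P^*}[\loss(\predictionset_{\threshold_n}(X),Y)\mid\calibration[n]]\le\alpha$ for all $n$ already holds on an event of probability at least $1-\delta$ by Theorem~\ref{thm:distshiftcontrol}. The plan is therefore to build a second event, also of probability at least $1-\delta$, on which the reverse inequality $\alpha-k_n\le$ risk holds for every $n$. A union bound then gives the joint statement with probability $1-2\delta$.

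\textbf{Step 1: the deterministic inequality at the chosen threshold.} Write $\Risk^{\omega}_n(\threshold)=\frac1n\sum_{i=1}^n\omega_i L_i(\threshold)$, where $L_i(\threshold)=\loss(\predictionset_\threshold(X_i),Y_i)$. Its mean is the test risk $\Risk^*(\threshold)=\E_{P^*}[\loss(\predictionset_\threshold(X),Y)]$. The threshold is the infimum in \eqref{eq:correctionterm}, and we take a running minimum over $n$. Together with right-continuity and the jump bound $d_n$, this gives just to the left of $\threshold_n$
\[
\Risk_n(\threshold_n)\ \ge\ \alpha-\correction_n-d_n .
\]
In the shift setting the empirical criterion implicitly compares $\Risk^{\omega}_n$ plus the term $B(1-\frac1n\sum\omega_i)$. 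That term is exactly what the first part of $\correction_n$ absorbs, so the same inequality holds for the weighted criterion up to $\correction_n$. Because of the running minimum, I will evaluate this only at the dyadic times $g(n)=2^{\lfloor\log_2 n\rfloor}$. Since $\threshold_n\le\threshold_{g(n)}$ and the loss is monotone, the risk at $\threshold_n$ is at least the risk at $\threshold_{g(n)}$ (carrying $d_{g(n)}$ and $\correction_{g(n)}$). This is why $k_n$ is indexed by $g(n)$.

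\textbf{Step 2: uniform lower concentration at dyadic times.} For each $j$, fix the sample size $2^j$. I need $\Risk^*(\threshold)\ge \Risk^{\omega}_{2^j}(\threshold)-\correction'_{2^j}(\delta_j)$ simultaneously for all $\threshold$ with probability $1-\delta_j$. It suffices to show this at the data-dependent $\threshold_{2^j}$. Following the argument for Proposition~\ref{prop:lowerbound}, I will use the standard monotone trick. Define the deterministic $\threshold^\dagger_j=\inf\{\threshold:\Risk^*(\threshold)\le\alpha-\eta\}$ for a suitable $\eta$. By monotonicity, $\threshold_{2^j}<\threshold^\dagger_j$ forces $\Risk^{\omega}_{2^j}(\threshold^\dagger_j-)$ to exceed its mean by $\eta$. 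That is a single-$\threshold$ deviation, so a fixed-$n$ Hoeffding/sub-Gaussian bound on the weighted sum applies, with summands $\omega_iL_i\in[0,B\omega_i]$ and variance proxy $B^2W_n/n^2$. This yields the stated $\gamma'_n(\delta)=\sqrt{2B^2W_n^2 n^{-2}\log(1/\delta)}$. Since $\sum_j\delta_j=\sum\delta/(j(j+1))\le\delta$, a union bound over $j$ makes the second event hold with probability at least $1-\delta$. Combining the two steps gives the lower bound with $k_n$ as stated.

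\textbf{Step 3: asymptotic tightness, and the main obstacle.} I must show $k_n\to0$. First, $d_{g(n)}\to0$ by assumption. Next, $\correction_{g(n)}\to0$: $\frac1n\sum\omega_i\to1$ kills the first term, and $h(B^2W_n)/n\lesssim \sqrt{W_n\log\log W_n}/n\to0$ by hypothesis (using the $b$-form of the radius). Finally, $\gamma'_{g(n)}(\delta_{\lfloor\log n\rfloor})$ carries only a $\log(1/\delta_j)=O(\log\log n)$ factor, so it vanishes under the same hypothesis. The step I expect to be delicate is Step 2. The weights $\omega_i$ are random and not bounded a priori, so the variance proxy $W_n$ depends on the data. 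A fixed-$n$ Hoeffding bound conditioned on the weights is not directly valid, because the losses are not independent of $\omega_i$. I would first try a self-normalized bound (e.g.\ de la Peña-type, $\sum \omega_iL_i-\E$ against $\sqrt{W_n}$) applied to the centered weighted losses. If that fails, I would fall back to the same time-uniform self-normalized inequality underlying Theorem~\ref{thm:distshiftcontrol}, applied to the negated sum.
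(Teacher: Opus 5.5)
Your architecture matches the paper's. The upper bound comes from Theorem~\ref{thm:distshiftcontrol}. The lower bound uses $\Risk^\omega_n(\threshold_n)\ge\alpha-\correction_n-d_n$ at the infimum, a fixed-time deviation bound at dyadic times with budget $\delta_j$, and the running-minimum transfer from $g(n)$ to $n$. The paper says the shifted proof ``proceeds identically'' to Proposition~\ref{prop:lowerbound} once the concentration step is in place.

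That concentration step is exactly your ``first try'', so commit to it and drop Hoeffding, whose range $[0,B\omega_i]$ is random. Since $\xi_i=\omega_i\loss_\threshold(X_i,Y_i)\ge 0$, Lemma~\ref{lemma:prepdistshift} and $1+x\le e^x$ give $\E[\exp(\tau(\xi_i-\Risk^*(\threshold))-\tau^2\xi_i^2/2)]\le 1$ for all $\tau\ge 0$. Hence $\sum_{i\le t}(\xi_i-\Risk^*(\threshold))$ is sub-gamma with scale $0$ and variance process $\sum_{i\le t}\omega_i^2\loss_\threshold(X_i,Y_i)^2\le B^2W_t$, with no boundedness of $\omega$ needed.

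Your other fallback, negating the sum from Theorem~\ref{thm:distshiftcontrol}, would not work. Its summands $\omega_i(B-\loss)$ are nonnegative, and the $x\ge 0$ lemma controls only their upper tail. The paper instead reruns the construction on $\omega_i\loss_\threshold$.

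This variance process also gives $\correction'_n$ with $W_n$, not $W_n^2$. With the literal $W_n^2$, Cauchy--Schwarz gives $\correction'_n\ge B\big(\tfrac1n\sum_i\omega_i\big)^2\sqrt{2\log(1/\delta)}$, which does not vanish. So your Step~3 is right only for the $W_n$ version.

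The genuine gap is the monotone reduction you add in Step~2, which the paper does not have. As written it points the wrong way. $\threshold_{2^j}<\threshold^\dagger_j$ implies $\Risk^*(\threshold_{2^j})>\alpha-\eta$, which is the good event. What it forces is $\Risk^\omega_{2^j}(\threshold^\dagger_j{-})\le\alpha-\correction_{2^j}$, a downward deviation; that is the validity-direction argument.

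For the lower bound, the bad event is $\threshold_{2^j}\ge\threshold^\dagger_j$, and the deviation must be read at $\threshold^\dagger_j$ itself, where $\Risk^*(\threshold^\dagger_j)\le\alpha-\eta$ by right-continuity. Strict inequality gives $\Risk^\omega_{2^j}(\threshold^\dagger_j)>\alpha-\correction_{2^j}$, and equality is absorbed by $d_{2^j}$. This requires $\eta=\correction_{2^j}+\correction'_{2^j}(\delta_j)$.

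That is where the argument breaks under shift. $\correction_n$ depends on $\frac1n\sum_i\omega_i$ and $W_n$, and $\correction'_n$ depends on $W_n$. So $\eta$, and hence $\threshold^\dagger_j$, are random, and you are back to a deviation at a data-dependent threshold. In the unshifted Proposition~\ref{prop:lowerbound}, where both corrections are deterministic, the corrected reduction does go through.

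The paper does not resolve this either. It applies the fixed-$\threshold$ bound of Lemma~\ref{lem:help-chernoff} directly at $\threshold_{2^i}$, and that lemma's choice $\tau=x/(V_n+cx)$ is tuned to a variance process that is random here. A rigorous Step~2 therefore needs two things, and both change $\correction'_n$ relative to the stated $k_n$:
\begin{itemize}
\item a deviation bound uniform in $\threshold$, e.g.\ bracketing the monotone family $\{\omega\loss_\threshold\}$ on a finite grid of levels of $\Risk^*$ (left limits included) plus a union bound;
\item a self-normalized bound valid for random $V_n$, e.g.\ Corollary~\ref{cor:explicit_stitching_pi2} with $c=0$.
\end{itemize}
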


\section{Derivations}
\label{sec:derivations}

To establish our main results, we construct suitable processes $(M_t)_{t = 0}^\infty$ and $(V_t)_{t = 0}^\infty$ that satisfy a sub-gamma condition, allowing us to apply Corollary~\ref{cor:explicit_stitching_pi2}.
\begin{definition}[Sub-$\psi$ condition, Definition 1 \cite{Howard_2020}]
\label{th:canonical_assumption}
Let $\smash{(S_t)_{t = 0}^\infty, (V_t)_{t = 0}^\infty}$ be real-valued processes
adapted to an underlying filtration $(\mathcal{F}_t)_{t = 0}^\infty$ with $\smash{S_0=V_0=0}$
and~$\smash{V_t\geq0}$ for all $t$. For $c\in \R$, define 
 \begin{align*}
    \psi_{G,c}(\tau) = \frac{\tau^2}{2(1 - c\tau)}, \quad \text{for}\quad 0 \leq \tau <  \tau_{\max},
  \end{align*}
where $\tau_{max}=1/\max\{0,c\}$ with the convention that $1/0=\infty$. We say that $(M_t)$ is a \emph{sub-gamma process} with scale parameter $c\in\R$ and variance process $(V_t)$ if, for each $\tau \in [0,\tau_{\max})$, there exists a supermartingale
$(L_t(\tau))_{t = 0}^\infty$ w.r.t. $(\mathcal{F}_t)$ such that
$\E[ L_0(\tau)] \leq 1$ and
  \begin{align*}
    \exp\left(\tau S_t - \psi_{G,c}(\tau) V_t \right) \leq L_t(\tau)
    \text{ a.s.\ for all } t.
  \end{align*}
\end{definition}

The following corollary is an immediate consequence of Theorem~1 in \cite{Howard2018TimeuniformNN}. For further details, see Appendix~\ref{app:stitch}.
\begin{corollary}
\label{cor:explicit_stitching_pi2}
Define
\[
d(v) = 2 \log\big(\log_2(v/m)+1\big) + \log\frac{\pi^2}{6 \delta}.
\]
 Let $(M_t)$ be a sub-gamma process with scale parameter $c\in \R$ and variance process $(V_t)$. Then, for any $\delta \in (0,1)$ and $m>0$, 
\begin{align*}
      \mathbb{P}\big(\forall t\geq1: M_t < \tilde{\mathcal{S}}_\delta(\max\{V_t,m\}) \big) \geq 1-\delta,
  \end{align*}
  
where 
\[
\tilde{\mathcal{S}}_\delta(v)
=
1.44\sqrt{ \, v \, d(v)} \;+\; 2.42 \, c \, d(v).
\]

\end{corollary}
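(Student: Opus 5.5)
The plan is to deduce Corollary~\ref{cor:explicit_stitching_pi2} from the stitched boundary of Theorem~1 in \cite{Howard2018TimeuniformNN}, specialising its free parameters and then simplifying the constants. That theorem says the following. For a sub-$\psi$ process $(M_t)$ with variance process $(V_t)$, take
\begin{itemize}
\item a geometric spacing $\eta>1$ and an initial intrinsic time $m>0$;
\item a nonnegative sequence $(\ell(k))_{k\ge 0}$ with $\sum_{k\ge0} 1/\ell(k)\le 1$.
\end{itemize}
Then with probability at least $1-\delta$, for all $t$,
\[
M_t < \mathcal{S}_\delta(\max\{V_t,m\}),
\]
where
\[
\mathcal{S}_\delta(v)=\sqrt{k_1^2 v\,\tilde\ell(v) + k_2^2 c^2 \tilde\ell(v)^2}+ k_2 c\,\tilde\ell(v),\qquad
\tilde\ell(v)=\log \ell(\log_\eta(v/m)) + \log(1/\delta).
\]
In the sub-gamma case the constants are
\[
k_1=(\eta^{1/4}+\eta^{-1/4})/\sqrt2,\qquad k_2=(\sqrt\eta+1)/2.
\]

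First I will recall why the stitched bound holds, so the specialisation is transparent. Partition intrinsic time into epochs $v\in[m\eta^k, m\eta^{k+1})$. On epoch $k$:
\begin{itemize}
\item Apply the line-crossing (Ville/supermartingale) inequality to $L_t(\tau_k)$, with $\tau_k$ tuned to the epoch and failure probability $\delta/\ell(k)$.
\item Over the epoch, the resulting linear boundary $\log(\ell(k)/\delta)/\tau_k+\psi_{G,c}(\tau_k)V_t/\tau_k$ is dominated by $\mathcal{S}_\delta$.
\end{itemize}
A union bound over $k$ then gives total failure probability $\le \delta\sum_k 1/\ell(k)\le\delta$. Since the corollary only asks us to instantiate this result, I will cite it rather than reprove it.

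Next I specialise the parameters.
\begin{itemize}
\item Take $\eta=2$, so that $\log_\eta(v/m)=\log_2(v/m)$.
\item Take $\ell(k)=(k+1)^2\pi^2/6$, which satisfies $\sum_k 1/\ell(k)=1$. Then
\[
\tilde\ell(v)=2\log(\log_2(v/m)+1)+\log\frac{\pi^2}{6\delta}=d(v).
\]
\item The epoch index is really $\lfloor\log_2(v/m)\rfloor$. Since $\ell$ is increasing, replacing it by the real value $\log_2(v/m)$ only enlarges the boundary, so this is harmless. Because the bound is evaluated at $\max\{V_t,m\}\ge m$, the logarithm stays nonnegative.
\end{itemize}

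Finally I bound the constants numerically.
\begin{itemize}
\item $k_1=(2^{1/4}+2^{-1/4})/\sqrt2\approx 1.4355\le 1.44$.
\item $k_2=(\sqrt2+1)/2\approx1.2071$.
\item Use $\sqrt{a+b}\le\sqrt a+\sqrt b$ on the square root, for $c\ge0$:
\[
\mathcal{S}_\delta(v)\le k_1\sqrt{v\,d(v)}+2k_2c\,d(v)\le 1.44\sqrt{v\,d(v)}+2.42\,c\,d(v),
\]
since $2k_2\approx 2.414\le2.42$.
\item For $c<0$ the sub-gamma condition with scale $c$ implies it with scale $0$, so the $c$-term can simply be dropped.
\end{itemize}
This gives $\mathcal{S}_\delta\le\tilde{\mathcal{S}}_\delta$, and the event inclusion yields the corollary.

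The main obstacle I expect is bookkeeping, not ideas. I need to match the exact form of Howard et al.'s constants $k_1,k_2$ for the gamma case, which in their paper come from approximating $\psi_G$ via $\psi_{G,c}(\tau)\le$ the CGF of the sub-gamma model. I also need to justify the monotone replacement of the floor by the real-valued $\log_2$. If their stated constants differ slightly, I would redo the per-epoch optimisation of $\tau_k$ directly. At the epoch's upper endpoint $m\eta^{k+1}$, the standard choice $\tau_k=\sqrt{2\ell/v}/(1+c\sqrt{2\ell/v})$ gives the required $k_1,k_2$.
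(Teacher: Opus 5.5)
For $c\ge 0$ your argument is the paper's own. You instantiate Theorem~\ref{th:stitching} with $\eta=2$, $l_0=1$, $h(k)=(k+1)^2\pi^2/6$, so that $\sum_k 1/h(k)=1$ and the log term becomes $d(v)$. You then bound $k_1=(2^{1/4}+2^{-1/4})/\sqrt2\approx1.4355\le1.44$, and use $\sqrt{k_1^2vd+k_2^2c^2d^2}+k_2cd\le k_1\sqrt{vd}+2k_2cd$ with $2k_2=\sqrt2+1\approx2.414\le2.42$. This is exactly the ``appropriately upper-bounding the boundary'' step in the appendix, and for $c\ge0$ it is complete.

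The step that fails is your treatment of $c<0$. Passing to scale $0$ gives $M_t<1.44\sqrt{v\,d(v)}$. But for $c<0$ the claimed boundary $1.44\sqrt{v\,d(v)}+2.42\,c\,d(v)$ is strictly smaller than that. So dropping the negative term proves a weaker event than the one stated, and your conclusion $\mathcal{S}_\delta\le\tilde{\mathcal{S}}_\delta$ is false there: $\mathcal{S}_\delta$ stays positive for every $c$, while $\tilde{\mathcal{S}}_\delta$ turns negative as $c\to-\infty$.

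This cannot be patched, because the corollary itself is false for sufficiently negative $c$. Take $M_t\equiv0$, $V_t\equiv0$ and $L_t(\tau)\equiv1$; this is sub-gamma for every $c$. Then $\max\{V_t,m\}=m$ and $d(m)=\log\frac{\pi^2}{6\delta}=:d_0>0$. Hence $\tilde{\mathcal{S}}_\delta(m)=1.44\sqrt{m d_0}+2.42\,c\,d_0\le0$ once $c\le-1.44\sqrt{m/d_0}/2.42$, and the event $\{\forall t\ge1: M_t<\tilde{\mathcal{S}}_\delta(m)\}$ has probability $0$.

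The correct fix is to restrict the statement to $c\ge0$. This covers both places the paper uses it: $c=B$ in Theorem~\ref{thm:main_riskcontrol} and $c=0$ in Theorem~\ref{thm:distshiftcontrol}. The paper's one-line justification is silent on this point as well.

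A minor remark on your fallback. The constant $k_1=(\eta^{1/4}+\eta^{-1/4})/\sqrt2$ comes from tuning $\tau_k$ at the geometric midpoint $m\eta^{k+1/2}$ of the epoch. Tuning at the upper endpoint $m\eta^{k+1}$, as you propose, gives $(\eta^{1/2}+\eta^{-1/2})/\sqrt2=1.5>1.44$ for $\eta=2$. Since you cite the theorem rather than re-derive it, this does not affect your main line.
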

For notational convenience, we denote the loss associated with the prediction set $\predictionset_{\threshold}(x)$ by 
\begin{equation*}
\loss_\threshold(x,y) :=\loss\left(\predictionset_{\threshold}(x),y\right).
\end{equation*}
The population risk is defined as $\Risk(\threshold)=\E[\loss_\threshold\sample]$.
To construct suitable sub-gamma processes, we will rely on certain properties of $\Risk$, and we first address the additional considerations that arise when $\Risk$ is discontinuous.
By assumption, $\loss_\threshold\sample$ is right-continuous in $\threshold$ and bounded. 
Hence, by the Dominated Convergence Theorem, $\Risk$ is right-continuous in $\threshold$. Furthermore, $\Risk$ is monotonically decreasing. For $\alpha \in [0,\max_{\threshold\in \Threshold}\Risk(\threshold)]$, define
\[
\hat{\threshold}(\alpha) := \inf \{\threshold : \Risk(\threshold) \le \alpha\}.
\]

We also define the left-limit of $\Risk$ at $\hat{\threshold}(\alpha)$ by
\[
\Risk(\hat{\threshold}(\alpha)^-) := \lim_{\threshold \uparrow \hat{\threshold}(\alpha)} \Risk(\threshold),
\]
and, similarly, the corresponding left-limit of the loss function by
\[
\loss_{\hat{\threshold}(\alpha)^-}(x,y) := \lim_{\lambda \uparrow \hat{\threshold}(\alpha)} \loss_\lambda(x,y).
\]

Since $\loss_\threshold(x,y)$ is monotonically decreasing in $\threshold$ and bounded, it follows from the Dominated Convergence Theorem that
\begin{align*}
\Risk(\hat{\threshold}(\alpha)^-) 
&= \lim_{\lambda \uparrow \hat{\threshold}(\alpha)} \mathbb{E}[\loss_\lambda\sample]= \mathbb{E}\Big[\lim_{\lambda \uparrow \hat{\threshold}(\alpha)} \loss_\lambda\sample\Big] \\
&= \mathbb{E}[\loss_{\hat{\threshold}(\alpha)^-}\sample].
\end{align*}

By monotonicity and right-continuity of $\Risk$, we have
\[
\Risk(\hat{\threshold}(\alpha)) \le \alpha \le \Risk(\hat{\threshold}(\alpha)^-).
\]

Define
\[
\pi(\alpha)
:= \begin{cases}
1, & \text{if } \Risk(\hat{\threshold}(\alpha)) = \alpha, \\[6pt]
\dfrac{\Risk(\hat{\threshold}(\alpha)^-) - \alpha}
      {\Risk(\hat{\threshold}(\alpha)^-) - \Risk(\hat{\threshold}(\alpha))},
& \text{if } \Risk(\hat{\threshold}(\alpha)) < \alpha.
\end{cases}
\]
Then $\pi(\alpha) \in [0,1]$, and
\[
\alpha
= \pi(\alpha)\, \Risk(\hat{\threshold}(\alpha))
+ (1-\pi(\alpha))\, \Risk(\hat{\threshold}(\alpha)^-).
\]

Equivalently,
\[
\mathbb{E}\Big[
\pi(\alpha)\, \loss_{\hat{\threshold}(\alpha)}\sample
+ (1-\pi(\alpha))\, \loss_{\hat{\threshold}(\alpha)^-}\sample
\Big]
= \alpha.
\]

\begin{proof}[Proof of Theorem~\ref{thm:main_riskcontrol}]
For the growing calibration sequence $\{(X_i,Y_i)\}_{i \ge 1}$, define
\[
\xi_i(\alpha)=\pi(\alpha)\loss_{\hat{\threshold}(\alpha)}\sample[i]+(1-\pi(\alpha))\loss_{\hat{\threshold}(\alpha)^-}\sample[i],
\]
and let
\[
M_t(\alpha)=\sum_{i=1}^t (\alpha-\xi_i(\alpha)).
\]
By construction, $\E[\alpha-\xi_i(\alpha)]=0$ and $(\alpha-\xi_i(\alpha))\in[\alpha-B,\alpha]$. 
First, note that these are sufficient conditions for a sub-Bernoulli process \cite{Howard_2020}, (Table~3, row “Bernoulli~1”). Applying Proposition~2 of \cite{Howard_2020} (Table~5, rows~3 and~5) then yields that $(M_t(\alpha))_{t\in \mathbb{N}}$ is sub-gamma with scale parameter $B$ and variance process $(V_t)_{t\in \mathbb{N}}$ given by
\begin{equation}
    V_t=\alpha(B-\alpha)t.
\label{eq:processvariance}
\end{equation}

By Corollary~\ref{cor:explicit_stitching_pi2}, with $c=B$, we obtain, for $m^*$ and $\delta$, 
\begin{align*}
    \mathbb{P}\left(\forall t\in \mathbb{N}: M_t(p) < f_{B,m^*,\delta}(\max\{m^*,V_t\})\right) \geq 1-\delta.
\end{align*}

Rewriting, yields, with probability at least $1-\delta$, for all $t$,

\begin{align*}
\alpha - \frac{f_{B,m,\delta}(\max\{m^*,V_t\})}{t}
&<
\frac{1}{t}
\sum_{i=1}^t
\loss_{\hat{\threshold}(\alpha)^-}(\sample[i])
.
\end{align*}

Thus, assuming the above is true, we get
\begin{align*}
    \threshold_t&=\inf \left\{ \lambda : \Risk_t(\lambda) \le \alpha-\frac{f_{B,m^*,\delta}(\max\{m^*,V_t\}) }{t}\right\}\\
     &\geq \inf \left\{ \lambda : \Risk_t(\lambda) < \frac{1}{t} \sum_{i=1}^t \loss_{\hat{\threshold}(\alpha)^-}\sample[i] \right\}\\ 
     &\geq \hat{\threshold}(\alpha),
\end{align*}
by right-continuity of $\Risk_t$. Hence,
\begin{align*}
    \mathbb{P}\left(\forall t\in \mathbb{N}: \threshold_t\geq \hat{\threshold}(\alpha)\right) \geq 1-\delta.
\end{align*} 
\end{proof}

\begin{lemma}
\label{lem:help-chernoff}
    Let $c\geq0$. Let $(M_t)_{t\in \mathbb{N}}$ be sub-gamma with scale parameter $c$ and variance process $(V_t)_{t\in \mathbb{N}}$. Then, for all $n\in \mathbb{N}$, such that $V_n>0$, and $\delta \in (0,1)$,
    \[
\mathbb{P}\Bigg(
M_n \ge c \, \log \frac{1}{\delta} + \sqrt{2 V_n \, \log \frac{1}{\delta} + \left(c \, \log \frac{1}{\delta}\right)^2}
\Bigg) \le \delta.
\]
    
\end{lemma}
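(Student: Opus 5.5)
The plan is a Cramér--Chernoff argument at the single time $n$, using the supermartingale $L_t(\tau)$ from the definition of a sub-gamma process. Write $\ell := \log\frac{1}{\delta}$ and
\[
x(v) := c\,\ell + \sqrt{2v\ell + (c\ell)^2}.
\]
The first step is algebraic: $x(v)$ is the positive root of $x^2 - 2c\ell x - 2v\ell = 0$, so it satisfies
\[
\frac{x(v)^2}{2\,(v + c\,x(v))} = \ell .
\]
For each $v>0$ I will then choose
\[
\tau(v) := \frac{x(v)}{v + c\,x(v)} .
\]
Then $1 - c\tau(v) = v/(v + c x(v)) > 0$, so $\tau(v)\in[0,\tau_{\max})$. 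A direct computation gives $\psi_{G,c}(\tau(v))\,v = x(v)^2/(2(v+cx(v)))$ and $\tau(v)x(v) = x(v)^2/(v+cx(v))$. Hence
\[
\tau(v)\,x(v) - \psi_{G,c}(\tau(v))\,v = \ell .
\]

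The second step, when $V_n$ is deterministic, is immediate. On the event $\{M_n \ge x(V_n)\}$, with the fixed $\tau = \tau(V_n)$, we have $\exp(\tau M_n - \psi_{G,c}(\tau)V_n) \ge e^{\ell} = 1/\delta$. The sub-gamma condition bounds the left side by $L_n(\tau)$. The supermartingale property gives $\E[L_n(\tau)] \le \E[L_0(\tau)] \le 1$, so Markov's inequality yields probability at most $\delta$.

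The main obstacle is the general statement, in which $V_n$ is random and the optimal $\tau(V_n)$ is data-dependent, so no single supermartingale can be fixed in advance. I would first try to reduce to a fixed $\tau$ through monotonicity. The map $\tau \mapsto \tau M - \psi_{G,c}(\tau)V$ is concave in $\tau$, and the event $\{M_n \ge x(V_n)\}$ is equivalent to $\{V_n \le x^{-1}(M_n)\}$, since $x$ is increasing.

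If that fails, I would exploit the structure of the processes to which the paper applies the lemma. These are sums of bounded increments, for which $V_t$ is predictable (indeed deterministic in \eqref{eq:processvariance}, or a function of observed weights). I would then condition on the $\sigma$-field generated by the variance information, so that $V_n$ is fixed and the second step applies conditionally. To make this legitimate, I would verify that conditioning preserves $\E[L_n(\tau)\mid V_n] \le 1$ for each fixed $\tau$, which holds when the increments defining $L_t(\tau)$ keep mean at most one given the weights.

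Integrating the conditional bound $\delta$ over the law of $V_n$ then gives the stated unconditional bound. I expect the conditional-supermartingale verification to be the delicate point. As a fallback I would use a mixture over $\tau$ in the style of \cite{Howard_2020}, which handles a random $V_n$ at the cost of constants, and then check whether those constants match the stated form.
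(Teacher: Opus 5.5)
Your first two steps are exactly the paper's proof: Markov's inequality applied to $\exp(\tau M_n-\psi_{G,c}(\tau)V_n)$ with $\tau=x/(V_n+cx)$, where $x$ solves $x^2/(2(V_n+cx))=\log\frac{1}{\delta}$. You also make explicit the bound $\E[L_n(\tau)]\le\E[L_0(\tau)]\le 1$, which the paper uses tacitly. The paper likewise treats $V_n$ as a constant when it inserts it into the Markov threshold and into $\tau$. So both arguments prove the lemma exactly when $V_n$ is deterministic. That covers its use in Proposition~\ref{prop:lowerbound}, where $V_t=\alpha(B-\alpha)t$ by \eqref{eq:processvariance}.

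Your plan for random $V_n$ cannot succeed by any of the routes you list, because for a genuinely random variance process the inequality does not follow from the sub-gamma definition. Here is a counterexample. Take $c=0$, $\ell=\log\frac{1}{\delta}$, a single step, and disjoint events $A_1,A_2$ of probability $p$ each. On $A_k$ set $(M_1,V_1)=(\sqrt{2v_k\ell},\,v_k)$, with $v_2/v_1$ very large. On the complement set $M_1=-K$, with $K$ very large and $V_1$ a small positive constant.

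Each term $\exp(\tau\sqrt{2v_k\ell}-\tau^2v_k/2)$ is at most $e^{\ell}=1/\delta$ and peaks at $\tau=\sqrt{2\ell/v_k}$. For well-separated $v_1,v_2$ their sum therefore stays below $1/\delta+1+\epsilon$ for every $\tau$. Near $\tau=0$, the value $-K$ keeps the expectation below one. With $p=1/(1/\delta+1+2\epsilon)$ this gives $\E\exp(\tau M_1-\tau^2V_1/2)\le 1$ for all $\tau\ge 0$, so the hypotheses hold with $V_1>0$. Yet $\mathbb{P}(M_1\ge\sqrt{2V_1\ell})=2p\approx 2\delta/(1+\delta)>\delta$. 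With $J$ separated scales the violation probability is roughly $J/(J-1+1/\delta)$, which tends to one.

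This example explains why each of your routes fails:
\begin{itemize}
\item \emph{Monotonicity.} A fixed $\tau$ gives a boundary linear in $V_n$. That line lies below the concave curve $x(V_n)$ only on a bounded range of $V_n$, so no monotonicity or concavity argument can replace the data-dependent choice $\tau(V_n)$.
\item \emph{Conditioning.} The bound $\E[L_n(\tau)\mid V_n]\le 1$ is an additional hypothesis, not a consequence of the definition, and it is violated in the example. Moreover, in the distribution-shift application $V_t$ involves the current observation, so it is not predictable.
\item \emph{Mixtures or stitching.} These boundaries must pay a term growing with the range of $V_n$ (a $\log$ or $\log\log$). The stated form therefore cannot be recovered by adjusting constants.
\end{itemize}

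The correct fix is to state the lemma for deterministic $V_n$, or to add the conditional supermartingale property as a hypothesis. This is also all the paper's own proof establishes, so its reuse of the lemma in Proposition~\ref{prop:distshiftlower}, where $V_t$ is random, is not covered.
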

\begin{proof}
    For any $0\leq \tau<1/c$ and $x\in \R$, we have
    \begin{align*}
    \mathbb{P}(M_n\! \ge \!x)\!&= \!\mathbb{P}\big(\exp(\tau M_n\!-\!\psi_{G,c}(\tau) V_n)\!\ge\! e^{\tau x - V_n \psi_{G,c}(\tau )}\big)\\
    & \leq\exp\Big(-\tau x + V_n \psi_{G,c}(\tau)\Big),
\end{align*}
by Markov's inequality.
Set $\tau=\frac{x}{V_n + c x} \in [0,1/c)$ and set the RHS equal to $\delta$, that is 
\[
\exp\Big(-\frac{x^2}{2(V_t + c x)}\Big) = \delta.
\]
Then,
\[
x = c \, \log \frac{1}{\delta} + \sqrt{ 2 V_t \, \log \frac{1}{\delta} + \left(c \, \log \frac{1}{\delta}\right)^2 }.
\]

\end{proof}

\begin{proof}[Proof of Proposition~\ref{prop:lowerbound}]
    For a fixed $\threshold\in \Threshold$ with $\Risk(\threshold)\leq \alpha$, define
    $\xi_i=\loss_{\threshold}\sample[i]-\Risk(\threshold)$, and 
    \[
M_t(\threshold)=\sum_{i=1}^t\xi_i.
    \]
    Note that $(M_t(\threshold))_{t\in \mathbb{N}}$ is sub-gamma with scale parameter $B$ and variance process$(V_t)_{t\in \mathbb{N}}$ given by \eqref{eq:processvariance}. Hence, by Lemma~\ref{lem:help-chernoff}, we obtain, for all $n\in \mathbb{N}$ and $\delta\in(0,1)$,

    \[
    \mathbb{P}\left(\Risk_n(\lambda)-\correction'_n(\delta)<\Risk(\threshold) \right)\geq 1-\delta,
    \]
where 
\[
\gamma_n'(\delta)=\frac{B}{n} \log \frac{1}{\delta} + \sqrt{2 \frac{\alpha(B-\alpha)}{n}   \log \frac{1}{\delta} + \left(\frac{B}{n}   \log \frac{1}{\delta}\right)^2}.
\]
Set $\delta_i=\delta/(i(i+1))$.
Then, by a standard union bound, we have
\[
\mathbb{P}\Bigg(\forall i:\ \Risk_{2^i}(\lambda_{2^i}) - \correction'_{2^i}(\delta_i) < \Risk(\threshold_{2^i}) \Bigg) \geq 1 - \delta.
\]

By assumption, the jumps of $\Risk_{2^i}$ are bounded by $d_{2^i}$. Hence,
\[
\Risk_{2^i}(\lambda_{2^i}) \geq \alpha - \correction_{2^i} - d_{2^i}.
\]

Moreover, since
\[
\Risk(\threshold_n) \leq \Risk(\threshold_m), \quad \text{for all } n \leq m,
\] 
the desired result follows.
\end{proof}

The proofs for \cref{thm:distshiftcontrol} and Proposition~\ref{prop:distshiftlower} are provided in Appendix~\ref{app:proofs}.

\section{Numerical Experiments}

We illustrate the performance of the anytime-valid risk controlled sets $\{ \predictionset_{\lambda_n}(X) \}$ in three examples below.

\subsection{Synthetic Regression Example}
\label{sec:synthetic}
The data are generated according to the linear model
\begin{equation*}
Y_i = 2 X_i + \epsilon_i, \quad \epsilon_i \sim \mathcal{N}(0,1), \quad X_i \sim \text{Uniform}(-3,3),
\end{equation*}
where $i \in \mathbb{N}$. We set the target miscoverage level to $\alpha= 5 \%$, $\delta= 10\%$, and let $\calibration[n]=\samples[n]$. Assume we have access to a (correctly specified) the predictive model $f$. We use a simples score function $s(X,y) = |y - f(X)|$. Following Corollary~\ref{cor:anytimeconfpred}, we construct a sequence of prediction sets $\{ \predictionset_{\lambda_n}(X)\}$. For each $n \in \mathbb{N}$, we evaluate the corresponding miscoverage probability
\[
\mathbb{P}(Y \not \in \predictionset_{\threshold_n}(X) \: | \: \calibration[n]).
\]
Since the sequence $\{ \predictionset_{\lambda_n}(X)\}$ depends on the realized calibration sample $\calibration[n]$ the resulting miscoverage probabilities are random. To characterize this variability, we repeat the entire data collection and prediction-set construction procedure over multiple independent runs, and for each run plot the miscoverage probability as a function of $n$ in Figure~\ref{fig:1b}. A run is colored  \textcolor{mygreen}{\bf green} if the miscoverage remains below the target level $\alpha$ and \textcolor{red}{\bf red} if the miscoverage exceed $\alpha$ for at least one value of $n\in \mathbb{N}$. We approximate this criterion by simulating only up to a fixed maximum sample size $n_{\max}$, which lead to an underrepresentation of runs that violate the miscoverage constraint.

For comparison, we also construct prediction sets using the standard correction term \eqref{eq:standardcorrection}. From Figure~\ref{fig:1a}, we observe that, on average, standard conformal prediction achieves a miscoverage close to the target level $\alpha$, as expected. However,  for almost every simulated run, there exists at least one sample size $n$ where the miscoverage exceeds the tolerance level $\alpha$.

Figure~\ref{fig:fixed} illustrates the miscoverage rates when using the fixed-time valid correction term \eqref{eq:duchi}, showing a clearer structure in which the miscoverage generally approaches $\alpha$ from below. While the probability of achieving miscoverage below $\alpha$ is at least $1 - \delta$ for any fixed $n$, the procedure is not valid time uniformly. Consequently, many runs still exceed the miscoverage threshold at some sample size $n$.

Indeed, by the law of the iterated logarithm, using either correction term \eqref{eq:standardcorrection} or \eqref{eq:duchi} produces a sequence of risks that will almost surely exceed the tolerance level $\alpha$ at some point $n$ as $n_{\max} \to \infty$ (see for instance,  Theorem~8.5.2 from \cite{Durrett2019}).

\begin{figure}[htbp]  
    \centering
    \includegraphics[width=0.45\textwidth]{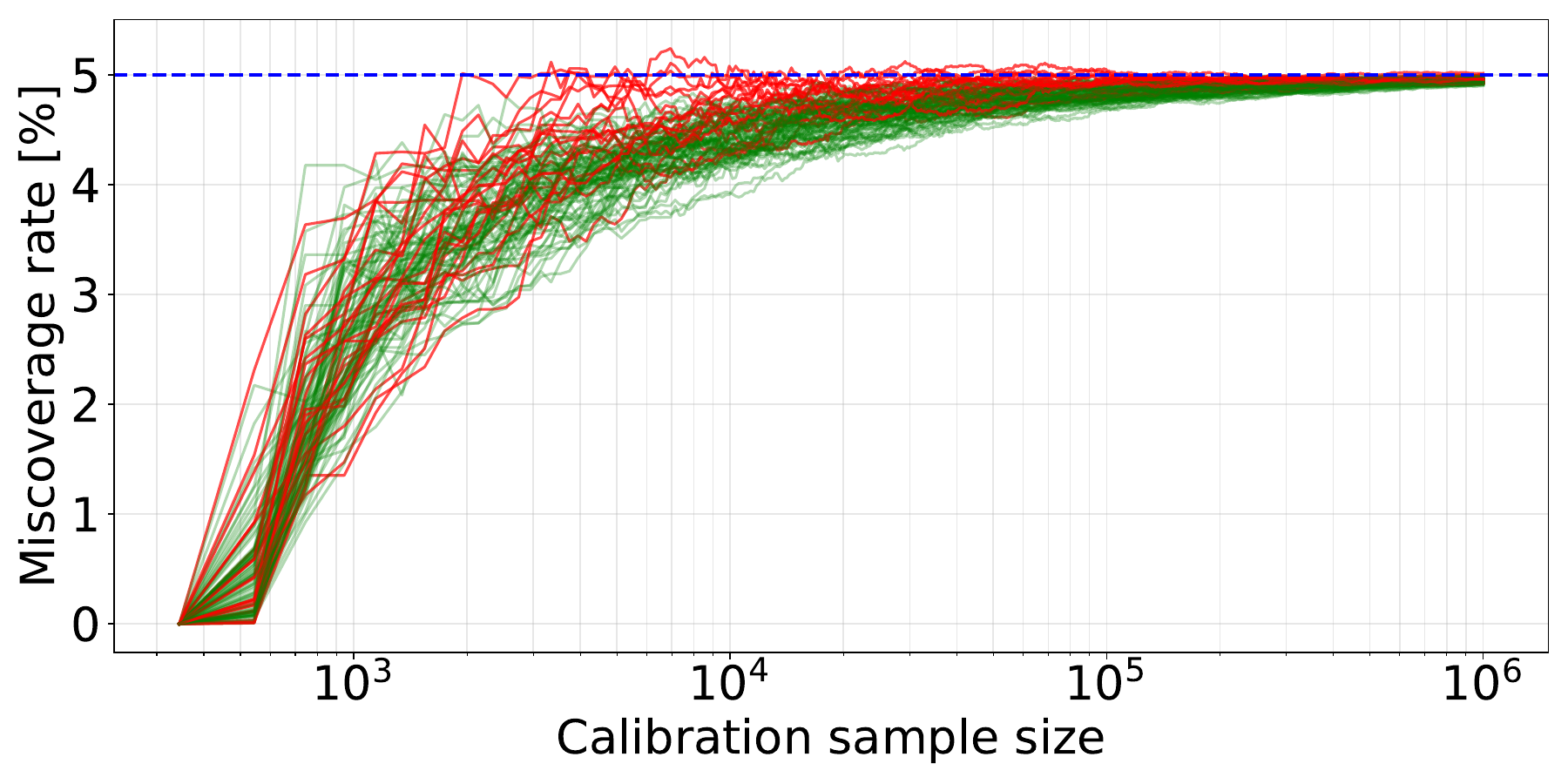} 
    \caption{Fixed-time valid method with $\delta=10\%$.}
    \label{fig:fixed}
\end{figure}
In contrast, the anytime-valid procedure (Figure~\ref{fig:1b}) maintains miscoverage below $\alpha$ across runs, with most sequences remaining green, and the realized miscoverage approaching $\alpha$ from below as $n$ increases.

\subsection{Synthetic Regression under Distribution Shift}

We illustrate Theorem~\ref{thm:distshiftcontrol} using the example in Section~2 of \cite{SHIMODAIRA2000227}. We assume that the data are generated according to 
\[
Y_i=-X_i+X_i^3+\epsilon_i, \quad \epsilon_i\sim \mathcal{N}(0,0.3^2),
\]
where $X_i\sim \mathcal{N}(0.5,0.5^2)$.
A linear model $f$ it fitted using $n=1000$ held-out data points. We assume that test points come from a different distribution $P^*$, with which $X \sim \mathcal{N}(0,0.3^2)$. Thus $\omega(x,y) = p^*(x)/p(x)$ is a ratio of Gaussian densities. The aim is to control the miscoverage rate
\[
\E_{\sample\sim P^*}\left[\mathbf{1}\{Y \notin\predictionset_{\threshold_n}(X)\}|\calibration[n]\right].
\]
We set $\alpha=10 \%, \delta=10\%$, and perform a Monte Carlo simulation as in Section~\ref{sec:synthetic}. The resulting miscoverage behavior is illustrated in Figure~\ref{fig:distshift}. For comparison, we also plot the resulting miscoverage in the absence of distribution shift, $\omega(x,y) \equiv 1$, using Corollary~\ref{cor:anytimeconfpred}. Figure~\ref{fig:distshift_noshift} shows the convergence towards $\alpha$ is substantially faster in this case.

\begin{figure}[h!]
    \centering
    \includegraphics[width=0.45\textwidth]{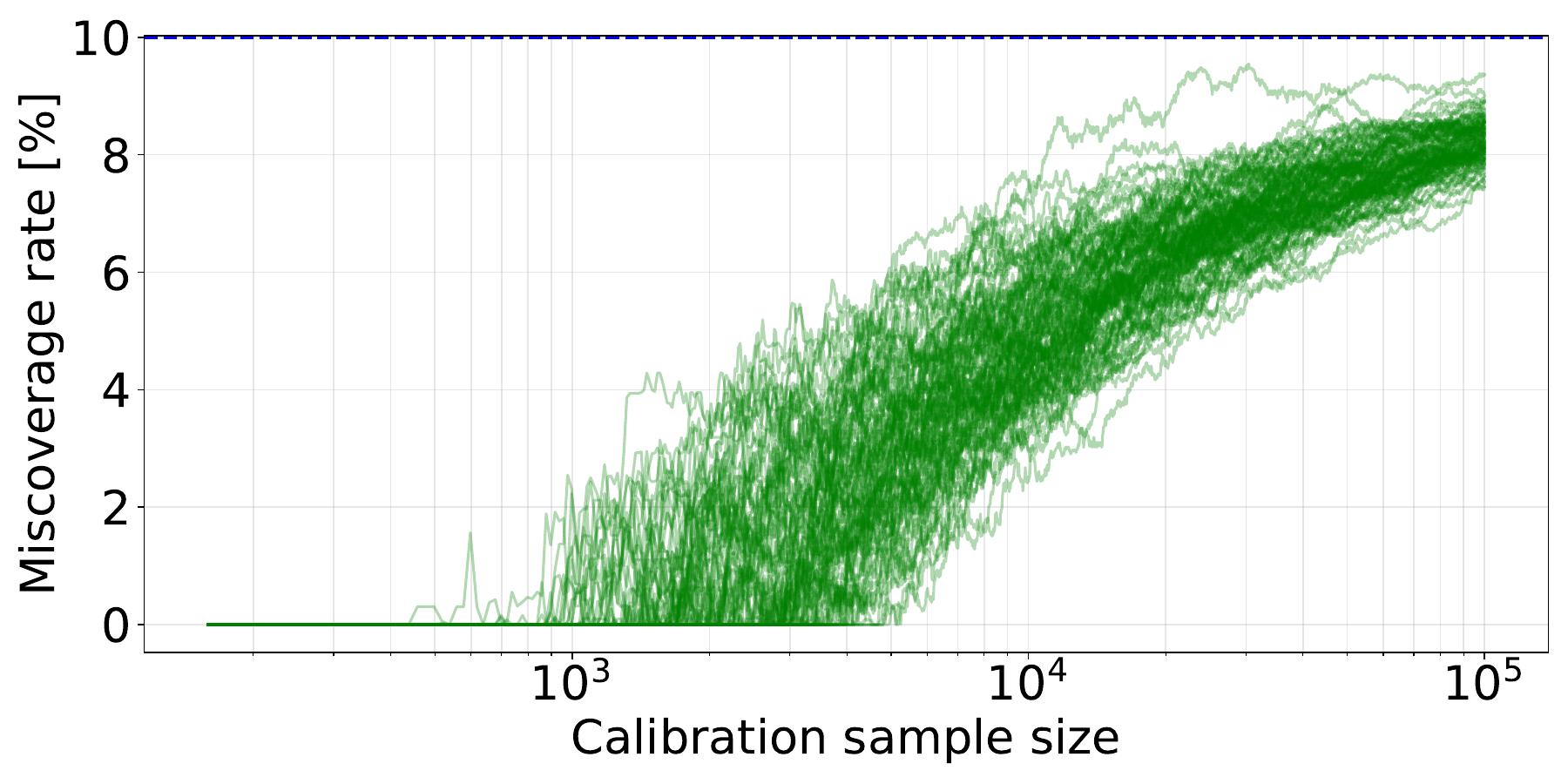}
    \caption{Miscoverage rates of prediction sets $\{ \predictionset_n(X) \}$ versus calibration sample size $n$ under a distribution shift.}
    \label{fig:distshift}
\end{figure}

\begin{figure}[h!]
    \centering
    \includegraphics[width=0.45\textwidth]{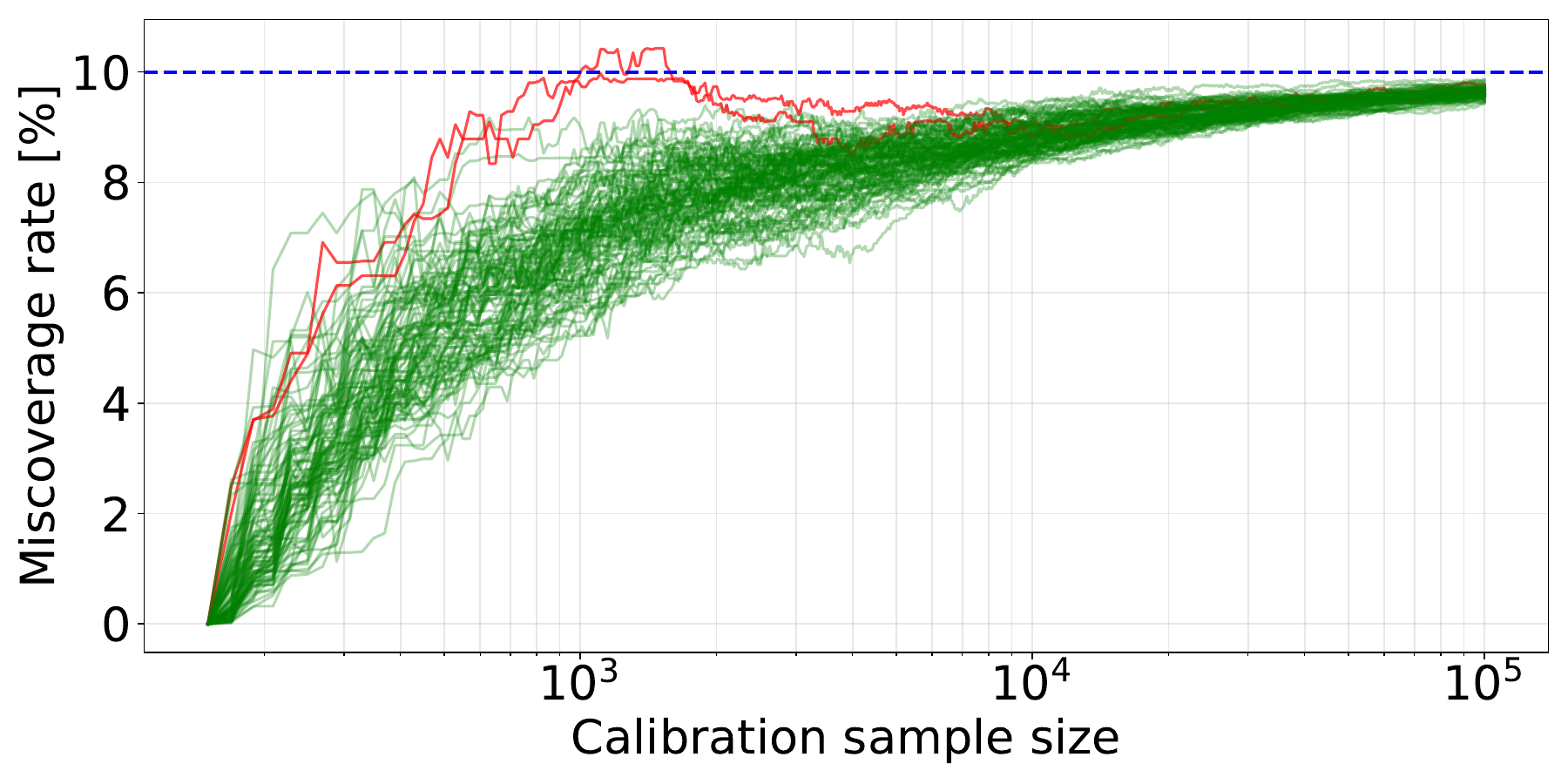}
    \caption{Miscoverage rates of prediction sets $\{ \predictionset_n(X) \}$ versus calibration sample size $n$.}
    \label{fig:distshift_noshift}
\end{figure}

\subsection{Image Classification}

We apply Corollary~\ref{cor:anytimeconfpred} to image classification using the data set ImageNet-1K, which consists of $K=1000$ different classes. We  use the base implementation from \cite{angelopoulos2022gentleintroductionconformalprediction}.
Our goal is, given an image, to output a set of classes containing the true class with probability of at least $1-\alpha$. Let $\mathcal{X}$ denote the image space and $\mathcal{Y}=\mathcal{P}(\{1,\dots,K\})$ the set of all subset of classes. We train a model $f$ that estimates class probabilities, i.e.
\[
f(x)\in [0,1]^K, \quad \sum_{i=1}^K f_i(x)=1.
\]
For a sample $\sample$, where $X\in \mathcal{X}$ is the image and $Y \in \{1,\dots,K\}$ its true class. We define the score function as 
\[
s\sample=f_Y(X).
\]
For a sequence of calibration samples $\sample_i$, let $\{ \predictionset_{\threshold_n} \}$  and $\{ \predictionset'_n\}$ denote two sequences of prediction sets constructed using Corollary~\ref{cor:anytimeconfpred} and \eqref{eq:duchi}, respectively. Using a held-out evaluation set, we estimate the expected sizes of these sets, i.e.,
\[
\E\big[|\predictionset_{\threshold_n}(X)|\big] \quad \text{and} \quad \E\big[|\predictionset'_{n}(X)|\big],
\]
and plot the results in Figure~\ref{fig:predictionsetsizecomp}. They illustrate that anytime-validity can be gained at equivalent prediction set sizes even with a small increase of sample size $n$.
\begin{figure}[t] 
    \centering
    \includegraphics[width=0.45\textwidth]{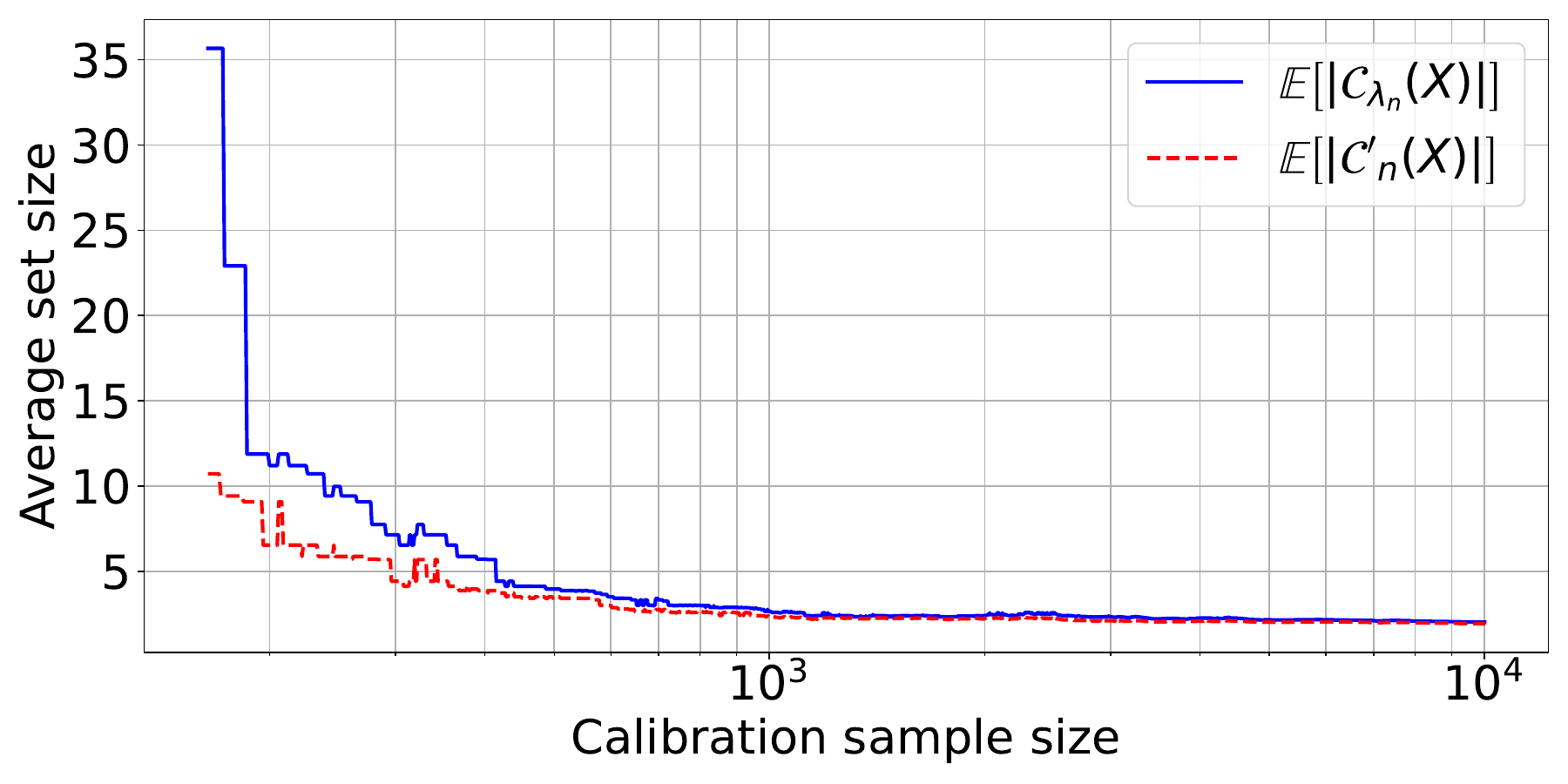}
    \caption{Expected sizes of prediction sets as a function of calibration sample size for ImageNet-1K classification. The solid blue line shows sets constructed according to Corollary~\ref{cor:anytimeconfpred}, while the dashed red line shows sets constructed using Equation~\eqref{eq:duchi}.}
    \label{fig:predictionsetsizecomp}
\end{figure}

\section{Conclusion}
We proposed an anytime-valid conformal risk control framework that guarantees, with high probability, that the risk of a sequence of prediction sets remains below a target level at all times while converging to it asymptotically. This is particularly useful in problems with cumulative data collection processes. The approach yields interpretable correction terms, matching lower bounds, and extensions to test-time distribution shift.

A natural limitation is that the method is more conservative at small sample sizes, requiring additional data before producing informative prediction sets. Future work includes extensions to online model updates as well as time-series models with uncertainty updates. This is an interesting direction as anytime-valid guarantees are especially relevant for models that need to run over extended periods in online settings.

\section*{Impact Statement}

This paper presents work whose goal is to advance the field of machine
learning. There might be societal consequences of our work, none
which we feel must be specifically highlighted here.

\bibliographystyle{icml2026}

\begin{thebibliography}{23}
\providecommand{\natexlab}[1]{#1}
\providecommand{\url}[1]{\texttt{#1}}
\expandafter\ifx\csname urlstyle\endcsname\relax
  \providecommand{\doi}[1]{doi: #1}\else
  \providecommand{\doi}{doi: \begingroup \urlstyle{rm}\Url}\fi

\bibitem[Angelopoulos \& Bates(2022)Angelopoulos and Bates]{angelopoulos2022gentleintroductionconformalprediction}
Angelopoulos, A.~N. and Bates, S.
\newblock A gentle introduction to conformal prediction and distribution-free uncertainty quantification, 2022.
\newblock  \url{https://arxiv.org/abs/2107.07511}.

\bibitem[Angelopoulos et~al.(2023)Angelopoulos, Bates, et~al.]{angelopoulos2023conformal}
Angelopoulos, A.~N., Bates, S., et~al.
\newblock Conformal prediction: A gentle introduction.
\newblock \emph{Foundations and trends{\textregistered} in machine learning}, 16\penalty0 (4):\penalty0 494--591, 2023.

\bibitem[Angelopoulos et~al.(2024)Angelopoulos, Bates, Fisch, Lei, and Schuster]{angelopoulos2024conformal}
Angelopoulos, A.~N., Bates, S., Fisch, A., Lei, L., and Schuster, T.
\newblock Conformal risk control.
\newblock \emph{International Conference on Learning Representations (ICLR)}, 2024.
\newblock  \url{https://openreview.net/forum?id=33XGfHLtZg}.

\bibitem[Angelopoulos et~al.(2025)Angelopoulos, Barber, and Bates]{angelopoulos2025theoreticalfoundationsconformalprediction}
Angelopoulos, A.~N., Barber, R.~F., and Bates, S.
\newblock Theoretical foundations of conformal prediction, 2025.
\newblock  \url{https://arxiv.org/abs/2411.11824}.

\bibitem[Bates et~al.(2021)Bates, Angelopoulos, Lei, Malik, and Jordan]{RCPS10.1145/3478535}
Bates, S., Angelopoulos, A., Lei, L., Malik, J., and Jordan, M.
\newblock Distribution-free, risk-controlling prediction sets.
\newblock \emph{Journal of the ACM}, 68\penalty0 (6), 2021.
\newblock  \url{https://doi.org/10.1145/3478535}.

\bibitem[Darling \& Robbins(1967)Darling and Robbins]{darlingd9fc6385-8ffb-31ca-a197-d0221f3b67d5}
Darling, D.~A. and Robbins, H.
\newblock Iterated logarithm inequalities.
\newblock \emph{Proceedings of the National Academy of Sciences of the United States of America}, 57\penalty0 (5):\penalty0 1188--1192, 1967.
\newblock  \url{http://www.jstor.org/stable/57876}.

\bibitem[Duchi(2025)]{duchi2025sampleconditional}
Duchi, J.
\newblock Sample-conditional coverage in split-conformal prediction.
\newblock In \emph{Advances in Neural Information Processing Systems}, 2025.
\newblock  \url{https://openreview.net/forum?id=aTBM5j3oyA}.

\bibitem[Duchi \& Haque(2024)Duchi and Haque]{pmlr-v247-duchi24a}
Duchi, J. and Haque, S.
\newblock An information-theoretic lower bound in time-uniform estimation.
\newblock \emph{Conference on Learning Theory}, 2024.
\newblock  \url{https://proceedings.mlr.press/v247/duchi24a.html}.

\bibitem[Durrett(2019)]{Durrett2019}
Durrett, R.
\newblock \emph{Probability: Theory and Examples}.
\newblock  Cambridge University Press, 5th edition, 2019.

\bibitem[Howard \& Ramdas(2022)Howard and Ramdas]{howard2022sequentialestimation}
Howard, S.~R. and Ramdas, A.
\newblock {Sequential estimation of quantiles with applications to A/B testing and best-arm identification}.
\newblock \emph{Bernoulli}, 28\penalty0 (3):\penalty0 1704 -- 1728, 2022.
\newblock  \url{https://doi.org/10.3150/21-BEJ1388}.

\bibitem[Howard et~al.(2018)Howard, Ramdas, McAuliffe, and Sekhon]{Howard2018TimeuniformNN}
Howard, S.~R., Ramdas, A., McAuliffe, J.~D., and Sekhon, J.~S.
\newblock Time-uniform, nonparametric, nonasymptotic confidence sequences.
\newblock \emph{The Annals of Statistics}, 2018.

\bibitem[Howard et~al.(2020)Howard, Ramdas, McAuliffe, and Sekhon]{Howard_2020}
Howard, S.~R., Ramdas, A., McAuliffe, J., and Sekhon, J.
\newblock Time-uniform chernoff bounds via nonnegative supermartingales.
\newblock \emph{Probability Surveys}, 17\penalty0 (none), J2020.
\newblock  \url{http://dx.doi.org/10.1214/18-PS321}.

\bibitem[Hulsman(2022)]{hulsman2022distribution}
Hulsman, R.
\newblock Distribution-free finite-sample guarantees and split conformal prediction.
\newblock \emph{arXiv:2210.14735}, 2022.

\bibitem[{Marques F.}(2025)]{MARQUESF2025110350}
{Marques F.}, P.~C.
\newblock Universal distribution of the empirical coverage in split conformal prediction.
\newblock \emph{Statistics \& Probability Letters}, 219:\penalty0 110350, 2025.
\newblock  \url{https://doi.org/10.1016/j.spl.2024.110350}.

\bibitem[Pournaderi \& Xiang(2026)Pournaderi and Xiang]{pournaderi2026trainingconditional}
Pournaderi, M. and Xiang, Y.
\newblock Training-conditional coverage bounds under covariate shift.
\newblock \emph{Transactions on Machine Learning Research}, 2026.
\newblock  \url{https://openreview.net/forum?id=F6hHT3qWxT}.

\bibitem[Shimodaira(2000)]{SHIMODAIRA2000227}
Shimodaira, H.
\newblock Improving predictive inference under covariate shift by weighting the log-likelihood function.
\newblock \emph{Journal of Statistical Planning and Inference}, 90\penalty0 (2):\penalty0 227--244, 2000.
\newblock  \url{https://doi.org/10.1016/S0378-3758(00)00115-4}.

\bibitem[Vovk(2012)]{pmlr-v25-vovk12}
Vovk, V.
\newblock Conditional validity of inductive conformal predictors.
\newblock \emph{Asian Conference on Machine Learning} 2012.
\newblock  \url{https://proceedings.mlr.press/v25/vovk12.html}.

\bibitem[Vovk et~al.(2005)Vovk, Gammerman, and Shafer]{vovk2005algorithmic}
Vovk, V., Gammerman, A., and Shafer, G.
\newblock \emph{Algorithmic learning in a random world}.
\newblock Springer, 2005.

\bibitem[Waudby-Smith \& Ramdas(2023)Waudby-Smith and Ramdas]{10.1093/jrsssb/qkad009}
Waudby-Smith, I. and Ramdas, A.
\newblock Estimating means of bounded random variables by betting.
\newblock \emph{Journal of the Royal Statistical Society Series B: Statistical Methodology}, 86\penalty0 (1):\penalty0 1--27, 02 2023.
\newblock  \url{https://doi.org/10.1093/jrsssb/qkad009}.

\bibitem[Waudby-Smith et~al.(2024)Waudby-Smith, Wu, Ramdas, Karampatziakis, and Mineiro]{offpolicy10.1145/3643693}
Waudby-Smith, I., Wu, L., Ramdas, A., Karampatziakis, N., and Mineiro, P.
\newblock Anytime-valid off-policy inference for contextual bandits.
\newblock \emph{ACM / IMS J. Data Sci.}, 1\penalty0 (3), 2024.
\newblock  \url{https://doi.org/10.1145/3643693}.

\bibitem[Xu et~al.(2024)Xu, Karampatziakis, and Mineiro]{xu2024active}
Xu, Z., Karampatziakis, N., and Mineiro, P.
\newblock Active, anytime-valid risk controlling prediction sets.
\newblock \emph{Advances in Neural Information Processing Systems}, 2024.
\newblock  \url{https://openreview.net/forum?id=4ZH48aGD60}.

\bibitem[Zwart(2025)]{zwart2025probabilisticconformalcoverageguarantees}
Zwart, P.~H.
\newblock Probabilistic conformal coverage guarantees in small-data settings, 2025.
\newblock  \url{https://arxiv.org/abs/2509.15349}.

\end{thebibliography}

\newpage
\appendix
\onecolumn


\section{Background on anytime-valid literature}
\label{app:stitch}

\citet{Howard2018TimeuniformNN} prove the following Theorem.
\begin{theorem}[Stitched boundary, Theorem 1 \cite{Howard2018TimeuniformNN}]\label{th:stitching}
    Let $(M_t)$ be a sub-gamma process with scale parameter $c\in \R$ and variance process $(V_t)$. For any $\smash{\delta \in (0,1), \eta > 1, m>0}$, and
  $\smash{h : \R_{\geq 0} \to \R_{\geq 0}}$ increasing such that
  $\smash{\sum_{k=0}^\infty 1 / h(k) \leq 1}$, define
  \begin{equation}
\small
\mathcal{S}_\delta(v) =
  \sqrt{k_1^2 v \cdot d(v) + k_2^2 c^2 \cdot d^2(v)} \\
  + k_2 c \cdot d(v),
\text{ where }
\begin{cases}
d(v) = \log h(\log_{\eta}(\frac{v}{m})) + \log(\frac{l_0}{\delta}),\\
k_1 = (\eta^{1/4} + \eta^{-1/4}) / \sqrt{2},\\
k_2 = (\sqrt{\eta} + 1) / 2.
\end{cases}
\label{eq:stitching_operator}
\end{equation}
  
  Then,
  \begin{align*}
      \mathbb{P}(\forall t\geq1: M_t < \mathcal{S}_\delta(\max\{V_t,m\})) \geq 1-\delta.
  \end{align*}
  
\end{theorem}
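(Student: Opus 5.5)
The plan is to reduce the time-uniform statement to a countable family of single-boundary crossing events, one for each geometric ``epoch'' of the variance process, and then take a union bound. I assume $l_0=1$, which is compatible with the hypothesis $\sum_{k\ge 0}1/h(k)\le 1$.

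\textbf{Epochs and error budget.} For $k=0,1,2,\dots$ define the epoch
\[
E_k=\{t : m\eta^k \le \max\{V_t,m\} < m\eta^{k+1}\}.
\]
Every $t$ lies in exactly one epoch; times with $V_t<m$ are absorbed into $E_0$ through the $\max\{V_t,m\}$. To epoch $k$ I assign the error budget $\delta_k=\delta/(l_0 h(k))$, so that $\sum_k\delta_k\le\delta$.

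\textbf{Linear boundary in each epoch.} For fixed $k$ and any $\tau_k\in[0,\tau_{\max})$, the sub-gamma assumption (Definition~\ref{th:canonical_assumption}) gives a nonnegative supermartingale $L_t(\tau_k)$ with $\E[L_0(\tau_k)]\le1$ that dominates $\exp(\tau_k S_t-\psi_{G,c}(\tau_k)V_t)$. Ville's maximal inequality for nonnegative supermartingales then yields
\[
\mathbb{P}\Big(\exists t: S_t \ge \frac{\log(1/\delta_k)}{\tau_k}+\frac{\psi_{G,c}(\tau_k)}{\tau_k}V_t\Big)\le\delta_k .
\]
This is a boundary linear in $V_t$, valid uniformly in $t$. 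A union bound over $k$ shows that, with probability at least $1-\delta$, every $t$ in every epoch $E_k$ satisfies
\[
S_t<\frac{\log(1/\delta_k)}{\tau_k}+\frac{\psi_{G,c}(\tau_k)}{\tau_k}\max\{V_t,m\}.
\]
Replacing $V_t$ by $\max\{V_t,m\}$ only enlarges the right-hand side, because the slope $\psi_{G,c}(\tau_k)/\tau_k$ is nonnegative.

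\textbf{Comparison with the stitched curve.} It remains to choose $\tau_k$ so that, for all $v\in[m\eta^k,m\eta^{k+1})$, the linear boundary is at most $\mathcal{S}_\delta(v)$. Two reductions make this manageable.
\begin{enumerate}
\item Since $h$ is increasing and $\log_\eta(v/m)\ge k$ on the epoch, we have $d(v)\ge\log(l_0h(k)/\delta)=\log(1/\delta_k)=:\ell_k$. Since $\mathcal{S}_\delta$ is increasing in $d$, it suffices to prove the claim with $d(v)$ replaced by the constant $\ell_k$.
\item The linear boundary is increasing in $v$, while $\mathcal{S}_\delta(v)$ with $d=\ell_k$ is increasing and concave in $v$. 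So it suffices to dominate the linear boundary at the epoch endpoint $\eta^{k+1}m$ by the stitched curve at the left endpoint $\eta^k m$, or more sharply to compare a chord with a concave curve.
\end{enumerate}

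\textbf{Choice of $\tau_k$.} I take $\tau_k$ to be the optimal gamma-Chernoff parameter for the geometric midpoint $v^*=m\eta^{k+1/2}$. Concretely,
\[
\tau_k=\frac{x^*}{v^*+cx^*},
\]
where $x^*$ solves $x^2/(2(v^*+cx))=\ell_k$, the same computation as in Lemma~\ref{lem:help-chernoff}. With this choice the linear boundary touches the sub-gamma Chernoff curve $\sqrt{2v\ell_k+c^2\ell_k^2}+c\ell_k$ at $v^*$. Evaluating it at the two ends $v^*\eta^{\pm1/2}$ produces the factors $(\eta^{1/4}+\eta^{-1/4})/\sqrt2=k_1$ on the $\sqrt{v\ell_k}$ term and $(\sqrt\eta+1)/2=k_2$ on the $c\,\ell_k$ term. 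This is how the constants in $\mathcal{S}_\delta$ arise.

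\textbf{Expected obstacle.} The main difficulty is this last inequality: showing that the line with slope $\psi(\tau_k)/\tau_k$ and intercept $\ell_k/\tau_k$ stays below
\[
\sqrt{k_1^2v\ell_k+k_2^2c^2\ell_k^2}+k_2c\,\ell_k
\]
on the whole epoch, for every $c$. I would first verify the two pure cases. For $c=0$ this is AM--GM applied at the endpoints $v^*\eta^{\pm1/2}$. For $v\to0$ the boundary is dominated by the $c$-terms, which gives $k_2$. I would then combine the two cases using the inequality $\sqrt{a}+\sqrt{b}\le\sqrt{2(a+b)}$ type bounds on the square-root form. If this algebra becomes too unwieldy, the fallback is to invoke the corresponding computation in \cite{Howard2018TimeuniformNN} directly.
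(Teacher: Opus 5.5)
The proposal has the right architecture but leaves its decisive step unproved and suggests a route for it that cannot work. The paper does not prove this theorem; it cites it. Your outline (geometric epochs in $\max\{V_t,m\}$, budget $\delta/h(k)$ per epoch, Ville's inequality for one linear boundary per epoch, a union bound, monotonicity of $\mathcal{S}_\delta$ in $d$) is the stitching argument of \citet{Howard2018TimeuniformNN}.

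\textbf{The missing step.} You need the epoch-$k$ line $g_k(v)=\ell_k/\tau_k+\psi_{G,c}(\tau_k)v/\tau_k$ to stay below $\mathcal{S}_{\ell_k}(v):=\sqrt{k_1^2v\ell_k+k_2^2c^2\ell_k^2}+k_2c\ell_k$ on $[\eta^km,\eta^{k+1}m]$, and this is left open. The first option in your reduction~2 is a dead end: no $\tau$ can give $g_\tau(\eta^{k+1}m)\le\mathcal{S}_{\ell_k}(\eta^km)$. Already for $c=0$, $\min_\tau g_\tau(\eta^{k+1}m)=\sqrt{2\eta^{k+1}m\ell_k}>k_1\sqrt{\eta^km\ell_k}$. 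Verifying the regimes $c=0$ and $v\to0$ separately and then ``combining'' them also does not give the inequality for general $(v,c)$.

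The concavity route (check both endpoints) is the right one, and with your own $\tau_k$ it is short:
\begin{itemize}
\item From $x^{*2}=2\ell_k(v^*+cx^*)$ you get $\ell_k/\tau_k=x^*/2$ and slope $x^*/(2v^*)$, so $g_k(v)=\tfrac{x^*}{2}(1+v/v^*)$.
\item Use $k_1^2\eta^{1/2}=2k_2^2$ and $k_1^2\eta^{-1/2}=2p^2$ with $p=(1+\eta^{-1/2})/2$.
\item Right endpoint: $g_k(\eta^{k+1}m)=k_2x^*=\mathcal{S}_{\ell_k}(\eta^{k+1}m)$ exactly.
\item Left endpoint: $g_k(\eta^km)=px^*=\sqrt{2p^2v^*\ell_k+p^2c^2\ell_k^2}+pc\ell_k\le\mathcal{S}_{\ell_k}(\eta^km)$, because $p\le k_2$ and $c\ge0$.
\end{itemize}

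\textbf{Two inaccuracies in your description of $\tau_k$.} It is the Bernstein parameter, not the Chernoff-optimal one (which has $1/\tau=c+\sqrt{v^*/(2\ell_k)}$). Also, for $c>0$ the line crosses rather than touches $\sqrt{2v\ell_k+c^2\ell_k^2}+c\ell_k$ at $v^*$, since its slope $x^*/(2v^*)$ is larger there. For $c\ge0$ either choice works; the optimal one gives $g_k(v)\le c\ell_k+k_1\sqrt{v\ell_k}\le\mathcal{S}_{\ell_k}(v)$ directly.

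\textbf{The case $c<0$.} The endpoint comparison needs $c\ge0$, whereas the statement allows $c\in\mathbb{R}$ and you claim it ``for every $c$''. For $c<0$ the map $q\mapsto\sqrt{A+q^2c^2\ell_k^2}+qc\ell_k$ is decreasing, so the left-endpoint inequality reverses. In fact no admissible line fits at all. Take $c=-1$, $\ell_k=1$, $\eta=4$ and the epoch $[100,400]$. The admissible lines are $u-1+v/(2u)$ with $u=1+1/\tau$. The condition at $v=100$ forces $u\in[5.52,9.05]$, while the condition at $v=400$ forces $u\in[10.51,19.03]$.

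So your argument, like any one-line-per-epoch stitching with $d$ frozen at $\ell_k$, proves the statement only for $c\ge0$. That covers every use in the paper ($c=B$ and $c=0$), but the restriction should be stated explicitly.
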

As a straightforward corollary of the above Theorem, by choosing $\eta=2$, $l_0=1$, and $h(k) = (k+1)^2 \pi^2/6$,
and appropriately upper-bounding the boundary, we obtain the following explicit statement.

\begin{corollary}
Define
\[
d(v) = 2 \log\big(\log_2(v/m)+1\big) + \log\frac{\pi^2}{6 \delta}.
\]
 Let $(M_t)$ be a sub-gamma process with scale parameter $c\in \R$ and variance process $(V_t)$. Then, for any $\delta \in (0,1)$, and $m>0$, 
\begin{align*}
      \mathbb{P}(\forall t\geq1: M_t < \tilde{\mathcal{S}}_\delta(\max\{V_t,m\})) \geq 1-\delta.
  \end{align*}
  
with 
\[
\tilde{\mathcal{S}}_\delta(v)
=
1.44\sqrt{ \, v \, d(v)} \;+\; 2.42 \, c \, d(v).
\]

\end{corollary}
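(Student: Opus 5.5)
The corollary is a specialization of Theorem~\ref{th:stitching}. The plan is to fix the free parameters, check the summability condition, and then bound the resulting boundary $\mathcal{S}_\delta$ pointwise from above by $\tilde{\mathcal{S}}_\delta$. Once $\mathcal{S}_\delta(v)\le\tilde{\mathcal{S}}_\delta(v)$ holds for all $v\ge m$, the event $\{\forall t\ge1: M_t<\mathcal{S}_\delta(\max\{V_t,m\})\}$ is contained in $\{\forall t\ge 1: M_t<\tilde{\mathcal{S}}_\delta(\max\{V_t,m\})\}$. The probability bound $1-\delta$ then transfers directly.

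\textbf{Step 1: parameters and the function $d$.} Take $\eta=2$, $l_0=1$ and $h(k)=(k+1)^2\pi^2/6$.
\begin{itemize}
\item $h$ is increasing and nonnegative on $\R_{\ge0}$.
\item $\sum_{k\ge0}1/h(k)=\frac{6}{\pi^2}\sum_{j\ge1}j^{-2}=1$, so the hypothesis of Theorem~\ref{th:stitching} holds.
\item The argument $\log_2(v/m)$ is nonnegative because $v=\max\{V_t,m\}\ge m$, so $h$ is only evaluated on its domain.
\item Substituting gives $\log h(\log_2(v/m))+\log(1/\delta)=2\log(\log_2(v/m)+1)+\log\frac{\pi^2}{6\delta}$. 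This is exactly the $d(v)$ of the corollary.
\end{itemize}

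\textbf{Step 2: numerical constants.} With $\eta=2$ we get $k_1=(2^{1/4}+2^{-1/4})/\sqrt2\approx 1.4355\le 1.44$ and $k_2=(\sqrt2+1)/2\approx1.2071$.

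\textbf{Step 3: bound the boundary.} Using $\sqrt{a+b}\le\sqrt a+\sqrt b$ for $a,b\ge0$, and assuming $c\ge0$,
\[
\mathcal{S}_\delta(v)\le k_1\sqrt{v\,d(v)}+k_2 c\,d(v)+k_2c\,d(v)=k_1\sqrt{v\,d(v)}+2k_2\,c\,d(v).
\]
Since $2k_2=\sqrt2+1\approx2.4142\le 2.42$ and $c\,d(v)\ge0$, this is at most $\tilde{\mathcal{S}}_\delta(v)$. Here $d(v)\ge\log(\pi^2/6)+\log(1/\delta)>0$ because $\delta<1$ and $\pi^2/6>1$.

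\textbf{Main obstacle: the sign of $c$.} The only real subtlety is the sign of the scale parameter. For $c<0$ the bound $\mathcal{S}_\delta\le\tilde{\mathcal{S}}_\delta$ can fail: $\mathcal{S}_\delta$ then behaves like $k_1\sqrt{vd}$, while $\tilde{\mathcal{S}}_\delta$ subtracts $2.42|c|d$. I would therefore state and prove the corollary for $c\ge0$. This is the only case used in the paper, where $c=B\ge0$ in the proof of Theorem~\ref{thm:main_riskcontrol}. If $c<0$ is wanted, I would instead observe that a sub-gamma process with scale $c<0$ is also sub-gamma with scale $0$, because $\psi_{G,c}\le\psi_{G,0}$ in that case, and then apply the $c=0$ bound.
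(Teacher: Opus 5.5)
Your proposal is correct and is the paper's argument made explicit. The paper likewise instantiates Theorem~\ref{th:stitching} with $\eta=2$, $l_0=1$, $h(k)=(k+1)^2\pi^2/6$ and bounds $\sqrt{k_1^2v\,d(v)+k_2^2c^2d(v)^2}+k_2c\,d(v)$ above by $k_1\sqrt{v\,d(v)}+2k_2c\,d(v)$, with $k_1\approx1.4355$ and $2k_2\approx2.4142$.

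Your restriction to $c\ge0$ is necessary, not just convenient, and the paper's ``$c\in\R$'' is an oversight (harmless there, since only $c=B$ and $c=0$ are used). The process $M_t\equiv V_t\equiv0$ is sub-gamma for every $c$, yet $\tilde{\mathcal{S}}_\delta(m)\le0$ once $c$ is sufficiently negative, so the stated bound is false for such $c$. For the same reason, your scale-$0$ reduction for $c<0$ only yields the boundary with $c$ replaced by $\max\{c,0\}$, not $\tilde{\mathcal{S}}_\delta$ itself.
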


\begin{remark}[Alternative constructions of correction terms] 
\label{app:fconstruction}
Our construction of the correction terms is based on
Corollary~\ref{cor:explicit_stitching_pi2}, which in turn relies on
Theorem~\ref{th:stitching} instantiated with specific parameter choices. Theorem~\ref{th:stitching} depends on the parameters
$\eta$ and $h$, which govern a trade-off between finite-sample constants and
asymptotic tightness. Smaller values of $\eta$ and more slowly growing
functions $h$ yield tighter asymptotic behavior, at the cost of larger
additive constants. Conversely, larger values lead to looser asymptotic
constants but improved behavior at smaller calibration sets. We refer the interested reader to
\cite{Howard2018TimeuniformNN} for further discussion.
\end{remark}

\section{Proofs}
\label{app:proofs}
Before proving Theorem~\ref{thm:distshiftcontrol}, we note the following fact. 
\begin{lemma}\label{lemma:prepdistshift}
    Let $x\geq 0$, then
    \[
    \exp\left(x-\frac{x^2}{2}\right) \leq 1+x.
    \]
\end{lemma}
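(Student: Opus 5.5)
The plan is to define $g(x)=\log(1+x)-x+\tfrac{x^2}{2}$ on $[0,\infty)$ and show $g\ge 0$. Since $\log$ is increasing, this is equivalent to the claimed inequality $\exp\!\left(x-\tfrac{x^2}{2}\right)\le 1+x$. The argument is a standard monotonicity one: check the value at the origin, then show the derivative is nonnegative.

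At $x=0$ we have $g(0)=\log 1-0+0=0$. Differentiating for $x\ge 0$ gives
\[
g'(x)=\frac{1}{1+x}-1+x=\frac{1-(1+x)+x(1+x)}{1+x}=\frac{x^2}{1+x}\ \ge 0 .
\]
Hence $g$ is nondecreasing on $[0,\infty)$, so $g(x)\ge g(0)=0$ for all $x\ge 0$. Exponentiating $\log(1+x)\ge x-\tfrac{x^2}{2}$ then yields the statement.

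No step is a real obstacle; the only care needed is the sign of $g'$. The denominator $1+x$ is positive precisely because $x\ge 0$ (indeed for $x>-1$), and the numerator simplifies to $x^2$, which is nonnegative. An alternative route would be the alternating Taylor bound $\log(1+x)\ge x-x^2/2$ for $x\ge0$, but the derivative argument is self-contained and is the one I would use.
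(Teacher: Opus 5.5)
Your proof is correct and complete: $g(0)=0$ and $g'(x)=x^2/(1+x)\ge 0$ give $\log(1+x)\ge x-\tfrac{x^2}{2}$ on $[0,\infty)$, and since both sides of the target inequality are positive, exponentiating yields the claim. The paper states this lemma as a ``fact'' and gives no proof, so there is no competing route to compare; your derivative argument is the standard one.

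One remark you could add. The same computation shows $g$ is strictly increasing on $(-1,0)$ too, so $g<0$ there and the inequality reverses for negative $x$. The hypothesis $x\ge 0$ is therefore essential. This is why the application in the proof of Theorem~\ref{thm:distshiftcontrol} needs $\tau\xi_i(\alpha)\ge 0$, which follows from $\tau\ge 0$, nonnegative importance weights, and $\loss\le B$.
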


\begin{proof}[Proof Theorem~\ref{thm:distshiftcontrol}]
We proceed similarly as the proof of Theorem~\ref{thm:main_riskcontrol}. For simplicity, we assume that $\Risk$ is continuous. If this is not the case, then as in the proof of Theorem~\ref{thm:main_riskcontrol}, we may replace each occurrence of $\loss_{\hat{\threshold}(\alpha)}$ with
\[
\left(\pi(\alpha)\loss_{\hat{\threshold}(\alpha)}\sample[i]+(1-\pi(\alpha))\loss_{\hat{\threshold}(\alpha)^-}\sample[i]\right).
\]
Define 
\[
\xi_i(\alpha)=\omega\sample[i]\left(B-\loss_{\hat{\threshold}(\alpha)}\sample[i]\right)      
\]
and 
\[
\mu(\alpha)=\E\left[\xi_i(\alpha)\right]=B-\alpha.
\]
Set
\[
M_t(\alpha)=\sum_{i=1}^t \xi_i(\alpha)-\mu(\alpha).
\]
For any $\tau \geq 0$, note that 
\begin{align*}
    &\E\left[\exp\left(\tau(\xi_i(\alpha)-\mu(\alpha))-\frac{\tau^2\xi_i(\alpha)^2}{2}\right)\right]\\
    &= \exp(-\tau \mu(\alpha))\E\left[\exp\left(\tau\xi_i(\alpha)-\frac{\tau^2\xi_i(\alpha)^2}{2}\right)\right]\\
    &\leq \exp(-\tau \mu(\alpha))\E \left[1+ \tau\xi_i(\alpha)\right]\\
    &= \exp(-\tau \mu(\alpha))(1+ \tau\mu(\alpha))\\
    &\leq \exp(-\tau \mu(\alpha))\exp(\tau \mu(\alpha))=1
\end{align*}
due to Lemma~\ref{lemma:prepdistshift}, since $\xi_i(\alpha)\tau \geq 0$, and the fact that $1+x\leq \exp(x)$, for all $x\in \R$.

Hence,  
$(M_t(\alpha))_{t\in \mathbb{N}}$ is sub-gamma with scale parameter $0$ and variance process $(V_t)_{t\in \mathbb{N}}$  given by
\[
V_t=\sum_{i=1}^t \omega_i^2 \left(B-\loss_{\hat{\threshold}(\alpha)}\sample[i] \right)^2 \leq B^2W_t^2.
\]

By Corollary~\ref{cor:explicit_stitching_pi2}, with $c=0$, we obtain,
\begin{align*}
    \mathbb{P}\left(\forall t\in \mathbb{N}: M_t(\alpha) < b_{B,m^*,\delta}(\max\{m^*,B^2W_t\})\right) \geq 1-\delta.
\end{align*}
Rewriting, yields
\begin{align*}
     \mathbb{P}\left(\forall t\in \mathbb{N}: \alpha-\correction_t < \Risk_t(\hat{\threshold}(\alpha))\right) \geq 1-\delta,
\end{align*}
for
\[
 \correction_t= \frac{b_{B,m^*,\delta}(\max\{m^*,B^2W_t\}) }{t}+B\left(1-\frac{1}{t}\sum_{i=1}^t\omega_i\right).
\]

The result follows by monotonicity and right-continuity of $\Risk_n$.
\end{proof}

\begin{proof}[Prof of Proposition~\ref{prop:distshiftlower}]
  For $\threshold\in \Threshold$, define
    $\xi_i(\threshold)=\omega_i\loss_{\threshold}\sample[i]$, and $\mu(\lambda)=\E[\xi_i(\threshold)]=\Risk(\threshold)$. By the same approach as in the proof of Theorem~\ref{thm:distshiftcontrol}, we can show that 
    \[
M_t(\threshold)=\sum_{i=1}^t(\xi_i-R(\lambda)).
    \]
    is sub-gamma with scale parameter $0$ and variance process 
    \[
    V_t=\sum_{i=1}^t \omega_i^2\loss_{\threshold}\sample[i]^2 \leq B^2W_t.
    \]
The remainder of the proof proceeds identically to that of Proposition~\ref{prop:lowerbound}.

\end{proof}



\end{document}